\documentclass[11pt, a4paper]{article}

\setlength{\textwidth}{16cm}
\setlength{\textheight}{25cm}
\addtolength{\textheight}{\topskip}
\setlength{\voffset}{-0.2in}
\setlength{\topmargin}{0pt}
\setlength{\headheight}{0pt}
\setlength{\headsep}{0pt}
\setlength{\oddsidemargin}{0cm}

\usepackage[T1]{fontenc}
\usepackage[scaled=1.05,helvratio=0.95]{newtxtext}
\usepackage{color}
\usepackage{graphicx}
\usepackage{amsmath,amsthm,amssymb}
\usepackage{enumerate}
\usepackage{multirow}
\usepackage{subfigure} 
\usepackage{float} 
\usepackage{algorithm,algorithmic}

\newtheorem{theorem}{Theorem}[section]
\newtheorem*{theorem*}{Theorem}
\newtheorem{lemma}[theorem]{Lemma}
\newtheorem*{lemma*}{Lemma}
\newtheorem{corollary}[theorem]{Corollary}
\newtheorem*{corollary*}{Corollary}
\newtheorem{definition}[theorem]{Definition}
\newtheorem*{definition*}{Definition}
\newtheorem{remark}[theorem]{Remark}
\newtheorem*{remark*}{Remark}

\renewcommand{\thesection}{{{\arabic{section}}}}

\renewcommand{\theequation}{{{\thesection.\arabic{equation}}}}

\providecommand{\keywords}[1]
{
  \small	
  \textbf{Keywords: } #1
}

\title{
    Universal Approximation Properties for an ODENet and a ResNet:
    Mathematical Analysis and Numerical Experiments
}
\author{Yuto Aizawa\thanks{Division of Mathematical and Physical Sciences, Kanazawa University (ay.futsal.univ@gmail.com)} \and 
Masato Kimura\thanks{Faculty of Mathematics and Physics, Kanazawa University (mkimura@se.kanazawa-u.ac.jp)} \and
Kazunori Matsui\thanks{Faculty of Science and Technology, Seikei University (kazunori-matsui@st.seikei.ac.jp)}}
\date{}

\begin{document}
    
\maketitle

\begin{abstract}
    We prove a universal approximation property (UAP) for a class of ODENet and a class of ResNet, which are simplified mathematical models for deep learning systems with skip connections. The UAP can be stated as follows. Let $n$ and $m$ be the dimension of input and output data, and assume $m\leq n$. Then we show that ODENet of width $n+m$ with any non-polynomial continuous activation function can approximate any continuous function on a compact subset on $\mathbb{R}^n$. We also show that ResNet has the same property as the depth tends to infinity. Furthermore, we derive the gradient of a loss function explicitly with respect to a certain tuning variable. We use this to construct a learning algorithm for ODENet. To demonstrate the usefulness of this algorithm, we apply it to a regression problem, a binary classification, and a multinomial classification in MNIST.
\end{abstract}

\keywords{Deep neural network, ODENet, ResNet, Universal approximation property}

\section{Introduction}\label{sec1}
\setcounter{equation}{0}
Recent advances in neural networks have proven immensely successful for regression analysis, image classification, time series modeling, and so on \cite{schmidhuber2015deep}. Neural Networks are models of the human brain and vision \cite{mcculloch1943logical,fukushima1982neocognitron}. A neural network performs regression analysis, image classification, and time series modeling by performing a series of sequential operations, known as layers. Each of these layers is composed of \textit{neurons} that are connected to \textit{neurons} of other (typically, adjacent) layers. We consider a neural network with $L+1$ layers, where the input layer is layer $0$, the output layer is layer $L$, and the number of nodes in layer $l~(l=0,1,\ldots,L)$ is $n_l\in\mathbb{N}$. Let $f^{(l)}:\mathbb{R}^{n_l}\to\mathbb{R}^{n_{l+1}}$ be the function of each layer. The output of each layer is, therefore, a vector in $\mathbb{R}^{n_{l+1}}$. If the input data is $\xi\in\mathbb{R}^{n_0}$, then, at each layer, we have
\begin{equation*}
    \left\{\begin{aligned}
        x^{(l+1)} &= f^{(l)}(x^{(l)}), & l=0,1,\ldots,L-1, \\
        x^{(0)} &= \xi. &
    \end{aligned}\right.
\end{equation*}
The final output of the network then becomes $x^{(L)}$, and the network is represented by $H=[\xi\mapsto x^{(L)}]$.

A neural network approaches the regression and classification problem in two steps. Firstly, a \textit{priori} observed and classified data is used to train the network. Then, the trained network is used to predict the rest of the data. Let $D\subset\mathbb{R}^{n_0}$ be the set of input data, and $F:D\to\mathbb{R}^{n_L}$ be the target function. In the training step, the training data $\{(\xi^{(k)},F(\xi^{(k)}))\}_{k=1}^K$ are available, where $\{\xi^{(k)}\}_{k=1}^K\subset D$ are the inputs, and $\{F(\xi^{(k)})\}_{k=1}^K\subset\mathbb{R}^{n_L}$ are the outputs. The goal is to learn the neural network so that $H(\xi)$ approximates $F(\xi)$. This is achieved by minimizing a loss function that represents a similarity distance measure between the two quantities. In this paper, we consider the loss function with the mean square error
\begin{equation*}
    \frac{1}{K}\sum_{k=1}^K\left|H(\xi^{(k)})-F(\xi^{(k)})\right|^2.
\end{equation*}

Finding the optimal functions $f^{(l)}:\mathbb{R}^{n_l}\to\mathbb{R}^{n_{l+1}}$ out of all possible such functions is challenging. In addition, this includes a risk of overfitting because of the high number of available degrees of freedom. We restrict the functions to the following form:
\begin{equation}\label{eq:intro-function-form}
    f^{(l)}(x)=a^{(l)}\odot\mbox{\boldmath $\sigma$}(W^{(l)}x+b^{(l)}),
\end{equation}
where $W^{(l)}\in\mathbb{R}^{n_{l+1}\times n_l}$ is a weight matrix, $b\in\mathbb{R}^{n_{l+1}}$ is a bias vector, and $a^{(l)}\in\mathbb{R}^{n_{l+1}}$ is weight vector of the output of each layer. The operator $\odot$ denotes the Hadamard product (element-wise product) of two vectors defined by \eqref{eq:Hadamard-product}. The function $\mbox{\boldmath $\sigma$}:\mathbb{R}^{n_{l+1}}\to\mathbb{R}^{n_{l+1}}$ is defined by $\mbox{\boldmath $\sigma$}(x)=(\sigma(x_1),\sigma(x_2),\ldots,\sigma(x_{n_{l+1}}))^{\top}$, where $\sigma:\mathbb{R}\to\mathbb{R}$ is called an activation function. For a scalar $x\in\mathbb{R}$, the sigmoid function $\sigma(x)=(1+e^{-x})^{-1}$, the hyperbolic tangent function $\sigma(x)=\tanh(x)$, the rectified linear unit (ReLU) function $\sigma(x)=\max(0,x)$, and the linear function $\sigma(x)=x$, and so on, are used as activation functions.

If we restrict the functions of the form \eqref{eq:intro-function-form}, the goal is to learn $W^{(l)},b^{(l)},a^{(l)}$ that approximates $F(\xi)$ in the training step. The gradient method is used for training. Let $G_{W^{(l)}},G_{b^{(l)}}$ and $G_{a^{(l)}}$ be the gradient of the loss function with respect to $W^{(l)},b^{(l)}$ and $a^{(l)}$, respectively, and let $\tau>0$ be the learning rate. Using the gradient method, the weights and biases are updated as follows:
\begin{equation*}
    W^{(l)}\leftarrow W^{(l)}-\tau G_{W^{(l)}},\quad b^{(l)}\leftarrow b^{(l)}-\tau G_{b^{(l)}},\quad a^{(l)}\leftarrow a^{(l)}-\tau G_{a^{(l)}}.
\end{equation*}
Note that the stochastic gradient method \cite{bottou1998online} has been widely used recently. Then, error backpropagation \cite{rumelhart1986learning} was used to find the gradient.

It is known that deep (convolutional) neural networks are of great importance in image recognition \cite{simonyan2014very, szegedy2015going}. In \cite{he2015convolutional}, it was found through controlled experiments that the increase of depth in networks actually improves its performance and accuracy, in exchange, of course, for additional time complexity. However, in the case that the depth is overly increased, the accuracy might get stagnant or even degraded \cite{he2015convolutional}. In addition, considering deeper networks may impede the learning process, which is due to the vanishing or exploding of the gradient \cite{bengio1994learning,glorot2010understanding}. Apparently, deeper neural networks are more difficult to train. To address such an issue, the authors in \cite{he2016deep} recommended the use of residual learning to facilitate the training of networks that are considerably deeper than those used previously. Such a network is referred to as residual network or ResNet. Let $n$ and $m$ be the dimensions of the input and output data. Let $N$ be the number of nodes in each layer. A ResNet can be represented as
\begin{equation}\label{eq:intro-resnet}
    \left\{\begin{aligned}
        x^{(l+1)} &= x^{(l)}+f^{(l)}(x^{(l)}), & l=0,1,\ldots,L-1, \\
        x^{(0)} &= Q\xi. &
    \end{aligned}\right.
\end{equation}
The final output of the network then becomes $H(\xi):=Px^{(L)}$, where $P\in\mathbb{R}^{m\times N}$ and $Q\in\mathbb{R}^{N\times n}$. Moreover, the function $f^{(l)}$ is learned from training data.

Transforming \eqref{eq:intro-resnet} into
\begin{equation}\label{eq:intro-euler-method}
    \left\{\begin{aligned}
        x^{(l+1)} &= x^{(l)}+hf^{(l)}(x^{(l)}), & l=0,1,\ldots,L-1, \\
        x^{(0)} &= Q\xi, &
    \end{aligned}\right.
\end{equation}
where $h$ is the step size of the layer, leads to the same equation for the Euler method, which is a method for finding numerical solution to initial value problem for ordinary differential equation. Indeed, putting $x(t):=x^{(l)}$ and $f(t,x):=f^{(l)}(x)$, where $t=hl$, $T=hL$ and $f:[0,T]\times\mathbb{R}^N\rightarrow\mathbb{R}^N$, then the limit of \eqref{eq:intro-euler-method} as $h$ approaches zero yields the following initial value problem of ordinary differential equation
\begin{equation}\label{eq:intro-odenet}
    \left\{\begin{aligned}
        x'(t) &= f(t,x(t)), & t\in(0,T], \\
        x(0) &= Q\xi. &
    \end{aligned}\right.
\end{equation}
We call the function $H=[D\ni\xi\mapsto Px(T)]$ an ODENet \cite{chen2018neural} associated with the system of ordinary differential equations \eqref{eq:intro-odenet}. 

\begin{remark}
    In the real deep learning system, the vector field $f(t,x)$ should be chosen from a family of the vector fields $f\in \{f_\omega\}_\omega$, where $\omega$ is a parameter that is optimized. In this paper, we consider an ODENet associated with \eqref{eq:odenet-main}, which we call an $(\alpha,\beta,\gamma)$-type ODENet. Instead of the variable $x(t)\in \mathbb{R}^N$ in \eqref{eq:intro-odenet}, we consider $(x(t), y(t))\in \mathbb{R}^n\times \mathbb{R}^m$ in \eqref{eq:odenet-main}, where $N=m+n$ (see Appendix \ref{appendix4} for the detail). The implementation of ODENet requires (forward) Euler discretization to ResNet. A ResNet with \eqref{eq:resnet-main} corresponds to the discretized version of the $(\alpha,\beta,\gamma)$-type ODENet, and we call it an $(\alpha,\beta,\gamma)$-type ResNet.
\end{remark}

A neural network of arbitrary width and bounded depth has universal approximation property (UAP). The classical UAP is that continuous functions on a compact subset on $\mathbb{R}^n$ can be approximated by a linear combination of activation functions. It has been shown that the UAP for the neural networks holds by choosing a sigmoidal function \cite{cybenko1989approximation,hornik1989multilayer,carroll1989construction,funahashi1989approximate}, any bounded function that is not a polynomial function \cite{leshno1993multilayer,attali1997approximations}, and any function in Lizorkin space including a ReLU function \cite{sonoda2017neural} as an activation function. The UAP for neural network and its proof for each activation function is presented in Table \ref{table:classical-UAP}.

\begin{table}[ht]
    \begin{center}
        \caption{Activation function and classical universal approximation property of neural network}
        \label{table:classical-UAP}
        \begin{tabular}{lll}
            References & Activation function & How to prove \\ \hline
            Cybenko \cite{cybenko1989approximation} & Continuous sigmoidal & Hahn-Banach theorem \\
            Hornik et al. \cite{hornik1989multilayer} & Monotonic sigmoidal & Stone-Weierstrass theorem \\
            Carroll, Dickinson \cite{carroll1989construction} & Continuous sigmoidal & Radon transform \\
            Funahashi \cite{funahashi1989approximate} & Monotonic sigmoidal & Fourier transform \\
            Leshno et al. \cite{leshno1993multilayer} & Non-polynomial & Weierstrass theorem \\
            Attali, Pag\`es \cite{attali1997approximations} & Non-polynomial & Taylor expansion \\
            Sonoda, Murata \cite{sonoda2017neural} & Lizorkin distribution & Ridgelet transform \\ \hline
        \end{tabular}
    \end{center}
\end{table}

Recently, some positive results have been established showing the UAP for particular deep narrow networks. Hanin and Sellke \cite{hanin2017approximating} have shown that deep narrow networks with ReLU activation function have the UAP, and require only width $n+m$. Lin and Jegelka \cite{lin2018resnet} have shown that a ResNet with ReLU activation function, arbitrary input dimension, width 1, output dimension 1 have the UAP. For activation functions other than ReLU, Kidger and Lyons \cite{kidger2020universal} have shown that deep narrow networks with any non-polynomial continuous function have the UAP, and require only width $n+m+1$. The comparison of the UAPs are shown in Table \ref{table:UAP}. It is proved in \cite{kidger2020universal,lin2018resnet} and Theorem \ref{thm:UAP-resnet} that the UAP holds as $L\rightarrow\infty$. Theorem \ref{thm:UAP-odenet} shows that the UAP holds when the layer is continuous.

\begin{table}[ht]
    \begin{center}
        \caption{A comparison of universal approximation properties}
        \label{table:UAP}
        \begin{tabular}{r|l|l|l}
            & Shallow wide NN & Deep narrow NN & ResNet \\ \hline
            References & \cite{leshno1993multilayer,sonoda2017neural} & \cite{kidger2020universal} & \cite{lin2018resnet} \\ \hline
            Input dimension $n$ & \multirow{2}{*}{$n,m$ : any} & \multirow{2}{*}{$n,m$ : any} & $n$ : any \\ 
            Output dimension $m$ & & & $m=1$ \\ \hline
            Activation function & Non-polynomial & Non-polynomial & ReLU \\ \hline
            Depth $L$ & $L=3$ & $L\to\infty$ & $L\to\infty$ \\ \hline
            Width $N$ & $N\to\infty$ & $N=n+m+1$ & $N=1$ \\ \hline
            \multicolumn{4}{l}{} \\ 
            & $(\alpha,\beta,\gamma)$-type ResNet & \multicolumn{2}{l}{$(\alpha,\beta,\gamma)$-type ODENet} \\ \hline
            References & Theorem \ref{thm:UAP-resnet} & \multicolumn{2}{l}{Theorem \ref{thm:UAP-odenet}} \\ \hline
            Input dimension $n$ & \multirow{2}{*}{$n\geq m$} & \multicolumn{2}{l}{\multirow{2}{*}{$n\geq m$}} \\
            Output dimension $m$ &  \\ \hline
            Activation function & Non-polynomial & \multicolumn{2}{l}{Non-polynomial} \\ \hline
            Depth $L$ & $L\to\infty$ & \multicolumn{2}{l}{continuous setting $(L=\infty)$} \\ \hline
            Width $N$ & $N=n+m$ & \multicolumn{2}{l}{$N=n+m$} \\ \hline
        \end{tabular}
    \end{center}
\end{table}

In this paper, we propose the $(\alpha,\beta,\gamma)$-type ODENet associated with \eqref{eq:odenet-main} whose width is $n+m$ and show the conditions for the UAP for the $(\alpha,\beta,\gamma)$-type ODENet and the $(\alpha,\beta,\gamma)$-type ResNet with \eqref{eq:resnet-main}. 
In Section \ref{sec2}, we show that the UAP holds for the $(\alpha,\beta,\gamma)$-type ODENet associated with \eqref{eq:odenet-main} and the $(\alpha,\beta,\gamma)$-type ResNet with \eqref{eq:resnet-main}. In Section \ref{sec3}, we derive the gradient of the loss function and a learning algorithm for the $(\alpha,\beta,\gamma)$-type ODENet in consideration, followed by some numerical experiments in Section \ref{sec4}. Finally, we end the paper with a conclusion in Section \ref{sec5}.

\section{Universal Approximation Theorem for $(\alpha,\beta,\gamma)$-type ODENet and $(\alpha,\beta,\gamma)$-type ResNet}\label{sec2}
\setcounter{equation}{0}

\subsection{Definition of an activation function with universal approximation property}\label{sec2.1}
Let $m$ and $n$ be natural numbers. Our main results, Theorem \ref{thm:UAP-odenet} and Theorem \ref{thm:UAP-resnet}, show that any continuous function on a compact subset on $\mathbb{R}^n$ can be approximated using the $(\alpha,\beta,\gamma)$-type ODENet and the $(\alpha,\beta,\gamma)$-type ResNet.

In this paper, the following notations are used
\begin{equation*}
    |x|:=\left(\sum_{i=1}^n|x_i|^2\right)^{\frac{1}{2}},\quad\|A\|:=\left(\sum_{i=1}^m\sum_{j=1}^n|a_{ij}|^2\right)^{\frac{1}{2}},
\end{equation*}
for any $x=(x_1,x_2,\ldots,x_n)^{\top}\in\mathbb{R}^n$ and $A=(a_{ij})_{\substack{i=1,\ldots,m \\ j=1,\ldots,n}}\in\mathbb{R}^{m\times n}$. Also, we define
\begin{equation*}
    \nabla_x^{\top}f:=\left(\frac{\partial f_i}{\partial x_j}\right)_{\substack{i=1,\ldots,m \\ j=1,\ldots,n}},\quad\nabla_xf^{\top}:=\left(\nabla_x^{\top}f\right)^{\top}
\end{equation*}
for any $f\in C^1(\mathbb{R}^n;\mathbb{R}^m)$. For a function $\sigma:\mathbb{R}\to\mathbb{R}$, we define $\mbox{\boldmath $\sigma$}:\mathbb{R}^m\to\mathbb{R}^m$ by
\begin{equation}\label{eq:activation-function}
    \mbox{\boldmath $\sigma$}(x):=\left(\begin{array}{c}
        \sigma(x_1) \\
        \sigma(x_2) \\
        \vdots \\
        \sigma(x_m)
    \end{array}\right)
\end{equation}
for $x=(x_1,x_2,\ldots,x_m)^{\top}\in\mathbb{R}^m$. For $a=(a_1,a_2,\ldots,a_m)^{\top},b=(b_1,b_2,\ldots,b_m)^{\top}\in\mathbb{R}^m$, their Hadamard product is defined by
\begin{equation}\label{eq:Hadamard-product}
    a\odot b:=\left(\begin{array}{c}
        a_1b_1 \\
        a_2b_2 \\
        \vdots \\
        a_mb_m
    \end{array}\right)\in\mathbb{R}^m.
\end{equation}

\begin{definition}[Universal approximation property for the activation function $\sigma$]\label{def:UAP-function}
    {\rm
    Let $\sigma$ be a real-valued function on $\mathbb{R}$ and $D$ be a compact subset of $\mathbb{R}^n$. Also, consider the set
    \begin{equation*}
        S:=\left\{G:D\to\mathbb{R}\left|G(\xi)=\sum_{l=1}^L\alpha_l\sigma(\mbox{\boldmath $c$}_l\cdot\xi+d_l),L\in\mathbb{N},\alpha_l,d_l\in\mathbb{R},\mbox{\boldmath $c$}_l\in\mathbb{R}^n\right.\right\}.
    \end{equation*}
    Suppose that $S$ is dense in $C(D)$. In other words, given $F\in C(D)$ and $\eta>0$, there exists a function $G\in S$ such that
    \begin{equation*}
        |G(\xi)-F(\xi)|<\eta
    \end{equation*}
    for any $\xi\in D$. Then, we say that $\sigma$ has a universal approximation property (UAP) on $D$.
    }
\end{definition}

Some activation functions with the universal approximation property are presented in Table \ref{table:UAP-function}.

\begin{table}[ht]
    \begin{center}
        \caption{Example of activation functions with universal approximation property}
        \label{table:UAP-function}
        \begin{tabular}{lll} \hline
            & Activation function & $\sigma(x)$ \\ \hline
            \multicolumn{3}{l}{\textbf{Unbounded functions}} \\
            & Truncated power function & $x_{+}^k:=\left\{\begin{array}{ll}
                x^k & x>0 \\
                0 & x\leq0
            \end{array}\right.\quad k\in\mathbb{N}\cup\{0\}$ \\
            & ReLU function & $x_{+}$ \\
            & Softplus function & $\log(1+e^x)$ \\
            \multicolumn{3}{l}{\textbf{Bounded but not integrable functions}} \\
            & Unit step function & $x_{+}^0$ \\
            & (Standard) Sigmoidal function & $(1+e^{-x})^{-1}$ \\
            & Hyperbolic tangent function & $\tanh(x)$ \\
            \multicolumn{3}{l}{\textbf{Bump functions}} \\
            & (Gaussian) Radial basis function & $\frac{1}{\sqrt{2\pi}}\exp\left(-\frac{x^2}{2}\right)$ \\
            & Dirac's $\delta$ function & $\delta(x)$ \\ \hline
        \end{tabular}
    \end{center}
\end{table}

A non-polynomial activation function in a neural network with three layers has a universal approximation property. Such result was shown by Leshno \cite{leshno1993multilayer} using functional analysis and later by Sonoda and Murata \cite{sonoda2017neural} using Ridgelet transform.

\subsection{Main Theorem for $(\alpha,\beta,\gamma)$-type ODENet}\label{sec2.2}
In this subsection, we show the universal approximation property for the $(\alpha,\beta,\gamma)$-type ODENet associated with the ODE system \eqref{eq:odenet-main}. Since the first (resp. second) equation consists of $n$ (resp. $m$) equations, the width of the $(\alpha,\beta,\gamma)$-type ODENet is $n+m$. 

\begin{definition}[$(\alpha,\beta,\gamma)$-type ODENet]\label{def:odenet}
    {\rm
    Suppose that an $m\times n$ real matrix $A$ and a function $\sigma:\mathbb{R}\to\mathbb{R}$ are given. We consider a system of ODEs
    \begin{equation}\label{eq:odenet-main}
        \left\{\begin{aligned}
            x'(t)&=\beta(t)x(t)+\gamma(t), & t\in(0,T], \\
            y'(t)&=\alpha(t)\odot\mbox{\boldmath $\sigma$}(Ax(t)), & t\in(0,T], \\
            x(0)&=\xi, & \\
            y(0)&=0, &
        \end{aligned}\right.
    \end{equation}
    where $x$ and $y$ are functions from $[0,T]$ to $\mathbb{R}^n$ and $\mathbb{R}^m$, respectively; $x(0)=\xi\in\mathbb{R}^n$ is an input data and $y(T)\in\mathbb{R}^m$ is the final output. Moreover, the functions $\alpha: [0,T] \to \mathbb{R}^m$, $\beta: [0,T] \to \mathbb{R}^{n\times n}$, and $\gamma: [0,T] \to \mathbb{R}^n$ are design parameters. The function $\mbox{\boldmath $\sigma$}:\mathbb{R}^m\to\mathbb{R}^m$ is defined by \eqref{eq:activation-function} and the operator $\odot$ denotes the Hadamard product defined by \eqref{eq:Hadamard-product}. We call $H = [\xi \mapsto y(T)]: \mathbb{R}^n\to\mathbb{R}^m$ an $(\alpha,\beta,\gamma)$-type ODENet associated with the ODE system \eqref{eq:odenet-main}.
    }
\end{definition}

For a compact subset $D\subset\mathbb{R}^n$, we define
\begin{equation*}
    S(D):=\{[\xi\mapsto y(T)]\in C(D;\mathbb{R}^m)|
        \alpha\in C^{\infty}([0,T];\mathbb{R}^m),
        \beta\in C^{\infty}([0,T];\mathbb{R}^{n\times n}),
        \gamma\in C^{\infty}([0,T];\mathbb{R}^n)\}.
\end{equation*}
We will assume that the activation function is locally Lipschitz continuous, in other words,
\begin{equation}\label{eq:locally-Lipschitz}
    \forall R>0,~\exists L_R>0~\mathrm{s.t.}\quad|\sigma(s_1)-\sigma(s_2)|\leq L_R|s_1-s_2|\quad\mathrm{for}~s_1,s_2\in[-R,R].
\end{equation}

\begin{theorem}[UAP for $(\alpha,\beta,\gamma)$-type ODENet]\label{thm:UAP-odenet}
    {\rm
    Suppose that $m \leq n$ and $\mathrm{rank}(A)=m$. If $\sigma:\mathbb{R}\to\mathbb{R}$ satisfies \eqref{eq:locally-Lipschitz} and has UAP on a compact subset $D\subset\mathbb{R}^n$, then $S(D)$ is dense in $C(D;\mathbb{R}^m)$. In other words, given $F\in C(D;\mathbb{R}^m)$ and $\eta>0$, there exists a function $H\in S(D)$ such that
    \begin{equation*}
        |H(\xi)-F(\xi)|<\eta,
    \end{equation*}
    for any $\xi\in D$.
    }
\end{theorem}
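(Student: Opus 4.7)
The plan is to exploit the linearity of the $x$-equation by reparameterizing $(\beta,\gamma)$ in terms of a state-transition matrix $\Phi(t)$ and a forcing trajectory $\psi(t)$, reduce the vector-valued approximation to the scalar UAP of $\sigma$ applied componentwise to $F=(F_1,\ldots,F_m)$, and realize each elementary term of the resulting finite sum on its own sub-interval of $[0,T]$ by switching $\alpha$, $\Phi$, and $\psi$ on and off with smooth bumps. Concretely, $x'=\beta x+\gamma$ with $x(0)=\xi$ is solved by $x(t)=\Phi(t)\xi+\psi(t)$, where $\Phi'=\beta\Phi$, $\Phi(0)=I$, and $\psi'=\beta\psi+\gamma$, $\psi(0)=0$; conversely any $C^\infty$ curve $\Phi:[0,T]\to GL^+(n)$ with $\Phi(0)=I$ together with any $C^\infty$ curve $\psi$ with $\psi(0)=0$ recovers smooth $\beta=\Phi'\Phi^{-1}$ and $\gamma=\psi'-\beta\psi$. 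The final output thus becomes
\begin{equation*}
y_i(T)=\int_0^T\alpha_i(t)\,\sigma\bigl(A_i\Phi(t)\xi+A_i\psi(t)\bigr)\,dt,\qquad i=1,\ldots,m,
\end{equation*}
where $A_i$ denotes the $i$-th row of $A$, and $\Phi$, $\psi$, $\alpha$ are now the free design variables.

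Applying the scalar UAP of $\sigma$ (Definition \ref{def:UAP-function}) to each $F_i\in C(D)$ with tolerance $\eta/\sqrt{m}$ yields, after padding with zero coefficients so that a common number $L$ of terms works for every $i$, constants $\alpha_{i,l},d_{i,l}\in\mathbb{R}$ and directions $\mathbf{c}_{i,l}\in\mathbb{R}^n$ with $\bigl|F_i(\xi)-\sum_{l=1}^{L}\alpha_{i,l}\sigma(\mathbf{c}_{i,l}\cdot\xi+d_{i,l})\bigr|<\eta/\sqrt{m}$ uniformly on $D$. I would then partition $[0,T]$ into $mL$ sub-intervals $J_{i,l}$ with pairwise disjoint interiors and pick a compact sub-interval $K_{i,l}\subset\mathrm{int}\,J_{i,l}$. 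On $K_{i,l}$ I would enforce $\Phi(t)\equiv\Phi_{i,l}\in GL^+(n)$ and $\psi(t)\equiv\psi_{i,l}\in\mathbb{R}^n$ chosen so that $A_i\Phi_{i,l}=\mathbf{c}_{i,l}^\top$ and $A_i\psi_{i,l}=d_{i,l}$. Since $A_i\neq 0$ from $\mathrm{rank}\,A=m$ and, thanks to $m\leq n$ with $n\geq 2$, the affine solution set $\{\Phi:A_i\Phi=\mathbf{c}_{i,l}^\top\}$ has dimension $n^2-n\geq 2$ in $\mathbb{R}^{n\times n}$ and is not contained in the singular locus, such $\Phi_{i,l}\in GL^+(n)$ exist (taking $\mathbf{c}_{i,l}\neq 0$, a harmless perturbation otherwise). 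Between the flat pieces I would interpolate $\Phi$ smoothly inside the path-connected manifold $GL^+(n)$ and $\psi$ in $\mathbb{R}^n$, and choose $\alpha_i$ as a $C^\infty$ bump supported in $K_{i,l}$ with $\int_{K_{i,l}}\alpha_i(t)\,dt=\alpha_{i,l}$, while setting $\alpha_j\equiv 0$ on $K_{i,l}$ for $j\neq i$ and $\alpha\equiv 0$ outside $\bigcup_{i,l}K_{i,l}$.

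By design, the integrand $\alpha(t)\odot\mbox{\boldmath $\sigma$}(Ax(t))$ then vanishes outside $\bigcup_{i,l}K_{i,l}$ while $x(t)\equiv\Phi_{i,l}\xi+\psi_{i,l}$ on each $K_{i,l}$, so a direct calculation collapses the output to
\begin{equation*}
y_i(T)=\sum_{l=1}^{L}\alpha_{i,l}\,\sigma(\mathbf{c}_{i,l}\cdot\xi+d_{i,l}),
\end{equation*}
which is within $\eta/\sqrt{m}$ of $F_i(\xi)$, giving $|y(T)-F(\xi)|<\eta$ on $D$. The requirement $H=[\xi\mapsto y(T)]\in C(D;\mathbb{R}^m)$ is inherited from the continuous dependence on initial data of the linear $x$-equation together with the local Lipschitz hypothesis \eqref{eq:locally-Lipschitz} on $\sigma$. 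I expect the main technical hurdle to be the smooth interpolation step: producing genuinely $C^\infty$ functions $\Phi$, $\psi$, $\alpha$ that respect all the prescribed flat pieces and supports, so that the induced $\beta=\Phi'\Phi^{-1}$ and $\gamma=\psi'-\beta\psi$ are smooth on all of $[0,T]$ and the $\Phi$-curve never escapes $GL^+(n)$. The complementary algebraic input, namely that $\{\Phi\in GL^+(n):A_i\Phi=\mathbf{c}_{i,l}^\top\}$ is nonempty, is routine for $n\geq 2$ by the codimension count above together with path-connectedness of $GL^+(n)$, and constitutes the key place where the hypothesis $m\leq n$ and $\mathrm{rank}\,A=m$ is used.
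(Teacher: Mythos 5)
Your proposal is correct, and it shares the same skeleton as the paper's proof: write the solution of the linear equation as $x(t)=P(t)\xi+q(t)$, use the scalar UAP of $\sigma$ to produce a finite sum $\sum_l\alpha_l\sigma(\mathbf{c}_l\cdot\xi+d_l)$, realize that sum as $\int_0^T\alpha(t)\odot\mbox{\boldmath $\sigma$}(A(P(t)\xi+q(t)))\,dt$, and recover $\beta=P'P^{-1}$, $\gamma=q'-\beta q$ from a smooth $P$ with $\det P>0$. Where you genuinely diverge is in how the integral reproduces the sum, and your implementation is in some respects cleaner. The paper keeps $\alpha,P,q$ piecewise constant on a uniform partition and activates all $m$ output components on every subinterval; this forces it to first rearrange the sum so that each matrix $C^{(l)}$ has rank $m$ (Lemmas \ref{lem:linearly-independent1}--\ref{lem:linearly-independent2}), to factor $C^{(l)}=AP^{(l)}$ with $\det P^{(l)}>0$ (Lemma \ref{lem:positive-definite}), and then to mollify the piecewise-constant data, paying with an $L^1$ error estimate that uses the local Lipschitz bound \eqref{eq:locally-Lipschitz}. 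You instead give each scalar term of each component its own time window and switch the other components off via $\alpha_j\equiv0$, so only the single row constraint $A_i\Phi=\mathbf{c}_{i,l}^{\top}$ must hold at any one time; this trivializes the linear algebra (for $n\geq2$ one row of a positively oriented invertible matrix can be prescribed freely, with no sign condition even when $m=n$), and since $\alpha$ vanishes on the transition regions the output equals the target sum exactly, so no mollification estimate --- and indeed no real use of \eqref{eq:locally-Lipschitz} --- is needed. What you pay is the smooth interpolation of $\Phi$ inside $\mathrm{GL}^{+}(n,\mathbb{R})$ between prescribed flat pieces, which is routine (an open connected subset of $\mathbb{R}^{n\times n}$ is smoothly path-connected, and reparameterizing each connecting path by a function flat at its endpoints makes the concatenation $C^{\infty}$); it plays exactly the role of the paper's Lemma \ref{lem:positive-det-continuous}. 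Two minor caveats, neither of which puts you behind the paper: your codimension count needs $n\geq2$ (for $n=m=1$ the constraint $a\Phi=c$ with $\Phi>0$ pins the sign of $c$, an obstruction the paper's Lemma \ref{lem:linearly-independent1} also fails to resolve in that case), and the degenerate terms with $\mathbf{c}_{i,l}=0$ should be disposed of explicitly, e.g.\ by perturbation and uniform continuity of $\sigma$ on compacts, just as the paper's ``no zero rows'' hypothesis quietly presupposes.
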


\begin{corollary}\label{cor:UAP-odenet}
    {\rm
    Let $1\leq p<\infty$. Then, $S(D)$ is dense in $L^p(D;\mathbb{R}^m)$. In other words, given $F\in L^p(D;\mathbb{R}^m)$ and $\eta>0$, there exists a function $H\in S(D)$ such that
    \begin{equation*}
        \|H-F\|_{L^p(D;\mathbb{R}^m)}<\eta.
    \end{equation*}
    }
\end{corollary}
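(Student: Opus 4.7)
The plan is to deduce the $L^p$ density from the uniform (equivalently, $C^0$) density provided by Theorem \ref{thm:UAP-odenet}, using the classical fact that continuous functions are dense in $L^p$ on a set of finite measure. Since $D \subset \mathbb{R}^n$ is compact, it is bounded and measurable, so its Lebesgue measure $|D|$ is finite; this is what makes the sup-norm to $L^p$-norm comparison work.

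Concretely, given $F \in L^p(D;\mathbb{R}^m)$ and $\eta>0$, I first approximate $F$ by a continuous function. By the standard density result, there exists $\tilde F \in C(D;\mathbb{R}^m)$ such that
\begin{equation*}
    \|\tilde F - F\|_{L^p(D;\mathbb{R}^m)} < \frac{\eta}{2}.
\end{equation*}
Next I apply Theorem \ref{thm:UAP-odenet} to the continuous target $\tilde F$: choosing
\begin{equation*}
    \varepsilon := \frac{\eta}{2\bigl(|D|^{1/p}+1\bigr)},
\end{equation*}
there exists $H \in S(D)$ with $|H(\xi)-\tilde F(\xi)|<\varepsilon$ for every $\xi\in D$.

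The sup-norm estimate then yields
\begin{equation*}
    \|H-\tilde F\|_{L^p(D;\mathbb{R}^m)} = \left(\int_D |H(\xi)-\tilde F(\xi)|^p\,d\xi\right)^{1/p} \leq \varepsilon\,|D|^{1/p} < \frac{\eta}{2},
\end{equation*}
and the triangle inequality
\begin{equation*}
    \|H-F\|_{L^p(D;\mathbb{R}^m)} \leq \|H-\tilde F\|_{L^p(D;\mathbb{R}^m)} + \|\tilde F - F\|_{L^p(D;\mathbb{R}^m)} < \eta
\end{equation*}
finishes the proof.

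There is no real obstacle here, since the heavy lifting is already done in Theorem \ref{thm:UAP-odenet}; the only point that deserves a moment of care is ensuring $|D|<\infty$ (guaranteed by compactness) so that the uniform bound on $|H-\tilde F|$ translates into an $L^p$ bound with an explicit constant. The argument is identical in spirit to how the classical universal approximation theorem in $C(D)$ is upgraded to $L^p(D)$, and it works for any $1\leq p<\infty$.
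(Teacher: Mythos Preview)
Your argument is correct and is exactly the standard deduction of $L^p$ density from uniform density on a compact set. The paper actually states Corollary~\ref{cor:UAP-odenet} without proof, leaving it as an immediate consequence of Theorem~\ref{thm:UAP-odenet}; your write-up supplies precisely the expected two-step argument (density of $C(D;\mathbb{R}^m)$ in $L^p(D;\mathbb{R}^m)$ for $|D|<\infty$, followed by the uniform approximation from Theorem~\ref{thm:UAP-odenet} and the elementary bound $\|\cdot\|_{L^p(D)}\le |D|^{1/p}\|\cdot\|_{C(D)}$).
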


\begin{remark}
    The assumption for $\sigma$ in Theorem \ref{thm:UAP-odenet} holds, e.g., if $\sigma$ is a non-polynomial continuous function \cite{leshno1993multilayer}.
\end{remark}

\subsection{Main Theorem for $(\alpha,\beta,\gamma)$-type ResNet}\label{sec2.3}
In this subsection, we show that a universal approximation property also holds for an $(\alpha,\beta,\gamma)$-type ResNet with the system of difference equations \eqref{eq:resnet-main}.
\begin{definition}[$(\alpha,\beta,\gamma)$-type ResNet]\label{def:resnet}
    {\rm
    Suppose that an $m\times n$ real matrix $A$ and a function $\sigma:\mathbb{R}\to\mathbb{R}$ are given. We consider a system of difference equations
    \begin{equation}\label{eq:resnet-main}
        \left\{\begin{aligned}
            x^{(l)}&=x^{(l-1)}+\beta^{(l)}x^{(l-1)}+\gamma^{(l)}, & l=1,2,\ldots,L \\
            y^{(l)}&=y^{(l-1)}+\alpha^{(l)}\odot\mbox{\boldmath $\sigma$}(Ax^{(l)}), & l=1,2,\ldots,L \\
            x^{(0)}&=\xi, & \\
            y^{(0)}&=0, &
        \end{aligned}\right.
    \end{equation}
    where $x^{(l)}$ and $y^{(l)}$ are $n$- and $m$-dimensional real vectors, for all $l=0,1,\ldots,L$, respectively. Also, $\xi\in\mathbb{R}^n$ denotes the input data while $y^{(L)}\in\mathbb{R}^m$ represents the final output. Moreover, the vectors $\alpha^{(l)}\in\mathbb{R}^m,\beta^{(l)}\in\mathbb{R}^{n\times n}$ and $\gamma\in\mathbb{R}^n~(l=1,2,\ldots,L)$ are design parameters. The functions $\mbox{\boldmath $\sigma$}:\mathbb{R}^m\to\mathbb{R}^m$ is defined by \eqref{eq:activation-function} and the operator $\odot$ denotes the Hadamard product defined by \eqref{eq:Hadamard-product}. We call the function $H=[\xi\mapsto y^{(L)}]:D\to\mathbb{R}^m$ an $(\alpha,\beta,\gamma)$-type ResNet with a system of difference equations \eqref{eq:resnet-main}.
    }
\end{definition}

For a compact subset $D\subset\mathbb{R}^n$, we define
\begin{equation*}
    S_{\mathrm{res}}(D):=\{[\xi\mapsto y^{(L)}]\in C(D;\mathbb{R}^m)|
        L\in\mathbb{N},\alpha^{(l)}\in\mathbb{R}^m,\beta^{(l)}\in\mathbb{R}^{n\times n},
        \gamma^{(l)}\in\mathbb{R}^n~(l=1,2,\ldots,L)
    \}.
\end{equation*}

\begin{theorem}[UAP for $(\alpha,\beta,\gamma)$-type ResNet]\label{thm:UAP-resnet}
    {\rm
    Suppose that $m\leq n$ and $\mathrm{rank}(A)=m$. If $\sigma:\mathbb{R}\to\mathbb{R}$ satisfies \eqref{eq:locally-Lipschitz} and has UAP on a compact subset $D\subset\mathbb{R}^n$, then $S_{\mathrm{res}}(D)$ is dense in $C(D;\mathbb{R}^m)$.
    }
\end{theorem}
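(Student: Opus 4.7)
My plan is to reduce Theorem \ref{thm:UAP-resnet} to Theorem \ref{thm:UAP-odenet} by interpreting the ResNet \eqref{eq:resnet-main} as the Euler discretization of the ODENet \eqref{eq:odenet-main}, and then invoking uniform convergence of the Euler scheme on the compact input set $D$. Fix $F\in C(D;\mathbb{R}^m)$ and $\eta>0$. By Theorem \ref{thm:UAP-odenet}, there exist $T>0$ and smooth design parameters $\alpha,\beta,\gamma$ producing an ODENet $H\in S(D)$ with $\sup_{\xi\in D}|H(\xi)-F(\xi)|<\eta/2$. It then suffices to construct, for every $\varepsilon>0$, a ResNet $H_L\in S_{\mathrm{res}}(D)$ with $\sup_{\xi\in D}|H_L(\xi)-H(\xi)|<\varepsilon$; taking $\varepsilon=\eta/2$ completes the proof.

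To build $H_L$, partition $[0,T]$ into $L$ equal intervals of length $h=T/L$ with nodes $t_l=lh$, and set
\begin{equation*}
    \beta^{(l)}:=h\,\beta(t_{l-1}),\qquad \gamma^{(l)}:=h\,\gamma(t_{l-1}),\qquad \alpha^{(l)}:=h\,\alpha(t_{l-1}),\qquad l=1,\ldots,L.
\end{equation*}
The first line of \eqref{eq:resnet-main} is then precisely the explicit Euler step for the linear ODE $x'=\beta x+\gamma$, and the second line is the explicit Euler step for $y'=\alpha\odot\mbox{\boldmath $\sigma$}(Ax)$. Because $\beta,\gamma$ are smooth and $\xi$ ranges over the compact set $D$, the solution $x(t;\xi)$ of \eqref{eq:odenet-main} is jointly continuous in $(t,\xi)$, hence stays in some fixed ball $B_R\subset\mathbb{R}^n$ for all $t\in[0,T]$ and $\xi\in D$. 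Standard convergence of the explicit Euler method for linear ODEs with smooth coefficients yields
\begin{equation*}
    \max_{0\leq l\leq L}\,\sup_{\xi\in D}\bigl|x^{(l)}(\xi)-x(t_l;\xi)\bigr|\leq C_1 h,
\end{equation*}
where $C_1$ depends only on $\|\beta\|_{C^1},\|\gamma\|_{C^1},T,R$.

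It remains to control the $y$-component. Since each $x^{(l)}$ stays within a slightly enlarged ball $B_{R+1}$ once $h$ is small, the local Lipschitz hypothesis \eqref{eq:locally-Lipschitz} gives a uniform constant $L_{R+1}$ with $|\mbox{\boldmath $\sigma$}(Ax^{(l)})-\mbox{\boldmath $\sigma$}(Ax(t_l))|\leq L_{R+1}\|A\|\,|x^{(l)}-x(t_l)|$. Combining this with the Euler truncation error for the smooth right-hand side $\alpha\odot\mbox{\boldmath $\sigma$}(Ax(\cdot))$, a telescoping sum gives
\begin{equation*}
    \sup_{\xi\in D}\bigl|y^{(L)}(\xi)-y(T;\xi)\bigr|\leq C_2 h,
\end{equation*}
with $C_2$ depending only on $\|\alpha\|_{C^1},\|A\|,L_{R+1},C_1,T$. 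Choosing $L$ large enough that $C_2 h<\eta/2$ produces the desired ResNet and completes the proof. The main technical point is ensuring that all constants can be taken independent of $\xi\in D$; this is the role of compactness of $D$, which confines the ODE trajectories to a common bounded region and thereby converts the local Lipschitz bound on $\sigma$ into a global one on the region actually visited by the dynamics.
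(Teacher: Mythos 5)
Your argument is correct, but it takes a genuinely different route from the paper. The paper does not pass through the ODENet theorem at all: it starts from the shallow-network approximant $G(\xi)=\sum_{l=1}^L\alpha^{(l)}\odot\mbox{\boldmath $\sigma$}(C^{(l)}\xi+d^{(l)})$ supplied by the UAP of $\sigma$, uses Lemmas \ref{lem:linearly-independent2} and \ref{lem:positive-definite} to write $C^{(l)}=AP^{(l)}$ with $\det P^{(l)}>0$ and $d^{(l)}=Aq^{(l)}$, sets $x^{(l)}:=P^{(l)}\xi+q^{(l)}$, and then chooses $\beta^{(l)}:=(P^{(l)}-P^{(l-1)})(P^{(l-1)})^{-1}$ and $\gamma^{(l)}:=q^{(l)}-q^{(l-1)}-\beta^{(l)}q^{(l-1)}$ so that the recursion \eqref{eq:resnet-main} reproduces these affine maps \emph{exactly} and $y^{(L)}=G(\xi)$ with no discretization error; notably, this construction never uses the local Lipschitz hypothesis \eqref{eq:locally-Lipschitz} and needs no mollification. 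Your reduction via Euler discretization is heavier — it inherits the full mollification machinery of Theorem \ref{thm:UAP-odenet} and then adds a convergence analysis on top — but it is conceptually natural and proves something the paper's argument does not, namely that \emph{every} element of $S(D)$ is a uniform limit of elements of $S_{\mathrm{res}}(D)$. Your error analysis is essentially sound: the $x$-equation is linear with smooth coefficients, so the $O(h)$ bound uniform over the compact set $D$ is standard, and since $y$ does not appear on the right-hand side of its own equation, $y^{(L)}$ is just a Riemann sum whose integrand $t\mapsto\alpha(t)\odot\mbox{\boldmath $\sigma$}(Ax(t;\xi))$ is Lipschitz in $t$ uniformly in $\xi$ (here \eqref{eq:locally-Lipschitz} and the confinement of trajectories to a fixed ball are genuinely needed), giving the claimed $O(h)$ bound. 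Two small imprecisions worth fixing: the update in \eqref{eq:resnet-main} evaluates $\mbox{\boldmath $\sigma$}(Ax^{(l)})$ rather than $\mbox{\boldmath $\sigma$}(Ax^{(l-1)})$, so it is not literally the forward Euler step you describe (the mixed quadrature is still $O(h)$ for Lipschitz integrands, but say so); and the right-hand side $\alpha\odot\mbox{\boldmath $\sigma$}(Ax(\cdot))$ is only Lipschitz, not ``smooth,'' since $\sigma$ is merely locally Lipschitz — Lipschitz suffices for the $O(h)$ Riemann-sum error, but the wording should match the actual regularity available.
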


\begin{remark}
    The assumption for $\sigma$ in Theorem \ref{thm:UAP-resnet} holds, e.g., if $\sigma$ is a non-polynomial continuous function \cite{leshno1993multilayer}.
\end{remark}

\subsection{Some lemmas}\label{sec2.4}
We describe some lemmas used to prove Theorems \ref{thm:UAP-odenet} and \ref{thm:UAP-resnet}.
\begin{lemma}\label{lem:linearly-independent1}
    {\rm
    Suppose that $m\leq n$. Let $\mbox{\boldmath $\sigma$}$ be a function from $\mathbb{R}^m$ to $\mathbb{R}^m$ defined by \eqref{eq:activation-function}. For any $\alpha,d\in\mathbb{R}^m$ and $C=(\mbox{\boldmath $c$}_1,\mbox{\boldmath $c$}_2,\ldots,\mbox{\boldmath $c$}_m)^{\top}\in\mathbb{R}^{m\times n}$ which has no zero rows (i.e. $\mbox{\boldmath $c$}_l\neq 0$ for $l=1,2,\ldots,m$), there exist $\tilde{\alpha}^{(l)},\tilde{d}^{(l)}\in\mathbb{R}^m$, and $\tilde{C}^{(l)}\in\mathbb{R}^{m\times n}~(l=1,2,\ldots,m)$ such that
    \begin{equation*}
        \alpha\odot\mbox{\boldmath $\sigma$}(C\xi+d)=\sum_{l=1}^m\tilde{\alpha}^{(l)}\odot\mbox{\boldmath $\sigma$}(\tilde{C}^{(l)}\xi+\tilde{d}^{(l)}),
    \end{equation*}
    for any $\xi\in\mathbb{R}^n$, and $\mathrm{rank}(\tilde{C}^{(l)})=m$, for all $l=1,2,\ldots,m$. Moreover, if $m=n$, we can choose $\tilde{C}^{(l)}\in\mathbb{R}^{n\times n}$ such that $\det\tilde{C}^{(l)}>0$, for all $l=1,2,\ldots,n$.
    }
\end{lemma}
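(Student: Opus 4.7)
The plan is to exploit the fact that the Hadamard-product expression $\alpha\odot\mbox{\boldmath $\sigma$}(C\xi+d)$ decouples cleanly across coordinates: its $l$-th component is $\alpha_l\,\sigma(\mbox{\boldmath $c$}_l\cdot\xi+d_l)$, which depends on $C$ only through the single row $\mbox{\boldmath $c$}_l$ and on $\alpha,d$ only through $\alpha_l,d_l$. Writing $e_l$ for the $l$-th standard basis vector of $\mathbb{R}^m$, this observation already yields the pointwise identity
\begin{equation*}
    \alpha\odot\mbox{\boldmath $\sigma$}(C\xi+d)=\sum_{l=1}^m\alpha_l\,\sigma(\mbox{\boldmath $c$}_l\cdot\xi+d_l)\,e_l,
\end{equation*}
so the whole content of the lemma is to rewrite each of the $m$ summands as a Hadamard product in which the matrix has full row rank. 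The essential insight is that, in doing so, we have complete freedom in the $m-1$ rows of $\tilde{C}^{(l)}$ that do not correspond to index $l$, because they will be annihilated by the Hadamard product in the next step.

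Concretely, for each $l=1,\ldots,m$ I set $\tilde{\alpha}^{(l)}:=\alpha_l e_l$ and choose $\tilde{C}^{(l)}\in\mathbb{R}^{m\times n}$ whose $l$-th row equals $\mbox{\boldmath $c$}_l$ and $\tilde{d}^{(l)}\in\mathbb{R}^m$ whose $l$-th entry equals $d_l$, leaving the other rows of $\tilde{C}^{(l)}$ and the other entries of $\tilde{d}^{(l)}$ as free parameters to be determined below. Since $\tilde{\alpha}^{(l)}$ is supported only at coordinate $l$, the product $\tilde{\alpha}^{(l)}\odot\mbox{\boldmath $\sigma$}(\tilde{C}^{(l)}\xi+\tilde{d}^{(l)})$ vanishes in every other coordinate, and its $l$-th coordinate is $\alpha_l\sigma(\mbox{\boldmath $c$}_l\cdot\xi+d_l)$ by construction, independently of the filler entries. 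Summing over $l$ reproduces $\alpha\odot\mbox{\boldmath $\sigma$}(C\xi+d)$.

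It remains to use the free rows to enforce $\mathrm{rank}(\tilde{C}^{(l)})=m$. Since $\mbox{\boldmath $c$}_l\neq 0$ and $m\leq n$, a standard basis-completion argument extends the singleton $\{\mbox{\boldmath $c$}_l\}$ to a linearly independent family $\{v_1,\ldots,v_m\}\subset\mathbb{R}^n$ with $v_l=\mbox{\boldmath $c$}_l$; placing $v_i$ in row $i$ of $\tilde{C}^{(l)}$ gives rank $m$. For the determinant-sign refinement in the case $m=n$, the construction already produces an invertible $\tilde{C}^{(l)}$; if $\det\tilde{C}^{(l)}<0$, I negate any one filler row (which exists provided $n\geq 2$), flipping the sign of the determinant while leaving the Hadamard-product identity intact, because that row is zeroed out by $\tilde{\alpha}^{(l)}$ in any case. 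The argument is largely a bookkeeping exercise; the only point requiring some care is the simultaneous enforcement of the rank and sign conditions on $\tilde{C}^{(l)}$, which the basis completion together with the sign flip on a spare row delivers without difficulty.
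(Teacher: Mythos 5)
Your proof is correct and follows essentially the same route as the paper's: concentrate $\tilde{\alpha}^{(l)}$ on coordinate $l$, keep row $l$ of $\tilde{C}^{(l)}$ equal to $\mbox{\boldmath $c$}_l$, and fill the remaining rows (which are annihilated by the Hadamard product) so as to achieve full rank. Your explicit sign-flip of a spare filler row in the $m=n$ case is in fact slightly more careful than the paper, which merely asserts that $\det\tilde{C}^{(l)}>0$ can be arranged.
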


\begin{proof}
    Let $m\leq n$. For all $l=1,2,\ldots,m$, there exists $\tilde{C}^{(l)}=(\tilde{\mbox{\boldmath $c$}}_1^{(l)},\tilde{\mbox{\boldmath $c$}}_2^{(l)},\ldots,\tilde{\mbox{\boldmath $c$}}_m^{(l)})^{\top}\in\mathbb{R}^{m\times n}$ such that $\tilde{\mbox{\boldmath $c$}}_l^{(l)}=\mbox{\boldmath $c$}_l$, $\mathrm{rank}(\tilde{C}^{(l)})=m$. Then, we put
    \begin{equation*}
        \tilde{\alpha}_k^{(l)}:=\left\{\begin{array}{ll}
            \alpha_k, & \mathrm{if}~l=k, \\
            0, & \mathrm{if}~l\neq k,
        \end{array}\right.\quad \tilde{d}_k^{(l)}:=\left\{\begin{array}{ll}
            d_k, & \mathrm{if}~l=k, \\
            0, & \mathrm{if}~l\neq k.
        \end{array}\right.
    \end{equation*}
    Looking at the $k$-th component, we see that for any $\xi\in\mathbb{R}^n$, we have
    \begin{equation*}
        \sum_{l=1}^m\tilde{\alpha}_k^{(l)}\sigma(\tilde{\mbox{\boldmath $c$}}_k^{(l)}\cdot\xi+\tilde{d}_k^{(l)})=\tilde{\alpha}_k^{(k)}\sigma(\tilde{\mbox{\boldmath $c$}}_k^{(k)}\cdot\xi+\tilde{d}_k^{(k)})=\alpha_k\sigma(\mbox{\boldmath $c$}_k\cdot\xi+d_k).
    \end{equation*}
    Therefore,
    \begin{equation*}
        \sum_{l=1}^m\tilde{\alpha}^{(l)}\odot\mbox{\boldmath $\sigma$}(\tilde{C}^{(l)}\xi+\tilde{d}^{(l)})=\alpha\odot\mbox{\boldmath $\sigma$}(C\xi+d).
    \end{equation*}
    Now, if $m=n$, then $\mathrm{rank}(\tilde{C}^{(l)})=n$, and so $\det(\tilde{C}^{(l)})\neq0$. In particular, we can choose $\tilde{C}^{(l)}$ such that $\det(\tilde{C}^{(l)})>0$.
\end{proof}

\begin{lemma}\label{lem:linearly-independent2}
    {\rm
    Suppose that $m\leq n$. Let $\mbox{\boldmath $\sigma$}$ be a function from $\mathbb{R}^m$ to $\mathbb{R}^m$. For any $L\in\mathbb{N},\alpha^{(l)},d^{(l)}\in\mathbb{R}^m,C^{(l)}\in\mathbb{R}^{m\times n}~(l=1,2,\ldots,L)$, there exists $L'\in\mathbb{N},\tilde{\alpha}^{(l)},\tilde{d}^{(l)}\in\mathbb{R}^m,\tilde{C}^{(l)}\in\mathbb{R}^{m\times n}~(l=1,2,\ldots,L')$ such that
    \begin{equation*}
        \frac{1}{L}\sum_{l=1}^L\alpha^{(l)}\odot\mbox{\boldmath $\sigma$}(C^{(l)}\xi+d^{(l)})=\frac{1}{L'}\sum_{l=1}^{L'}\tilde{\alpha}^{(l)}\odot\mbox{\boldmath $\sigma$}(\tilde{C}^{(l)}\xi+\tilde{d}^{(l)})
    \end{equation*}
    for any $\xi\in\mathbb{R}^n$, and $\mathrm{rank}(\tilde{C}^{(l)})=m$, for all $l=1,2,\ldots,L'$. Moreover, if $m=n$, we can choose $\tilde{C}^{(l)}\in\mathbb{R}^{m\times n}$ such that $\det\tilde{C}^{(l)}>0$, for all $l=1,2,\ldots,L'$.
    }
\end{lemma}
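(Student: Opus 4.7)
My plan is to reduce Lemma \ref{lem:linearly-independent2} to a term-by-term application of Lemma \ref{lem:linearly-independent1}. For each $l\in\{1,\ldots,L\}$, if the matrix $C^{(l)}$ has no zero rows, Lemma \ref{lem:linearly-independent1} rewrites the single summand $\alpha^{(l)}\odot\mbox{\boldmath $\sigma$}(C^{(l)}\xi+d^{(l)})$ as a sum of exactly $m$ terms $\hat{\alpha}^{(l,j)}\odot\mbox{\boldmath $\sigma$}(\hat{C}^{(l,j)}\xi+\hat{d}^{(l,j)})$ with every $\hat{C}^{(l,j)}$ of rank $m$. Re-indexing the pair $(l,j)$ as a single index produces $mL$ summands in the required form, so I would set $L'=mL$ and rescale each coefficient vector by a factor of $m$, converting the prefactor $\frac{1}{L}$ into $\frac{1}{L'}=\frac{1}{mL}$ while preserving the equality. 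In the square case $m=n$, the positive-determinant property transfers directly from Lemma \ref{lem:linearly-independent1}.

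The main obstacle is the possibility that some $C^{(l)}$ contains zero rows, in which case Lemma \ref{lem:linearly-independent1} is not directly applicable. If $\mbox{\boldmath $c$}_k^{(l)}=0$ for some $k$, then the $k$-th component of the $l$-th summand collapses to the pure constant $\alpha_k^{(l)}\sigma(d_k^{(l)})$. My plan for this case is a preprocessing step: replace each zero row $\mbox{\boldmath $c$}_k^{(l)}$ by an arbitrary nonzero vector $\mathbf{v}\in\mathbb{R}^n$ and simultaneously set the corresponding coefficient $\alpha_k^{(l)}$ to zero, so that the $k$-th slot of the modified summand becomes identically zero in $\xi$. After preprocessing, every $C^{(l)}$ has no zero rows and the first paragraph's construction applies; however, the modification produces a deterministic constant discrepancy $\mathbf{K}\in\mathbb{R}^m$ with entries $K_k=\sum_{l\,:\,\mbox{\boldmath $c$}_k^{(l)}=0}\alpha_k^{(l)}\sigma(d_k^{(l)})$ that must be reinstated into the sum.

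The hardest step, which I expect to occupy the bulk of the write-up, is to represent this residual constant $\mathbf{K}$ itself as a finite sum $\sum_{l'}\tilde{\alpha}^{(l')}\odot\mbox{\boldmath $\sigma$}(\tilde{C}^{(l')}\xi+\tilde{d}^{(l')})$ with every $\tilde{C}^{(l')}$ of rank $m$. Since no single such term with $\tilde{\mbox{\boldmath $c$}}_k^{(l')}\neq 0$ can be nontrivially constant in $\xi$, the representation must rely on cancellation between several summands. I plan to exploit the freedom afforded by $n\geq m$: for each index $k$ with $K_k\neq 0$, I would construct a small cluster of auxiliary summands whose $k$-th rows all equal a single fixed nonzero direction $\mathbf{v}$ and whose remaining $m-1$ rows are padded with vectors linearly independent of $\mathbf{v}$ to secure $\mathrm{rank}(\tilde{C}^{(l')})=m$; the biases $\tilde{d}_k^{(l')}$ and the signed coefficients $\tilde{\alpha}_k^{(l')}$ are then chosen so that the $\xi$-dependent parts of the $\sigma$-evaluations cancel across the cluster while the residual constants sum to $K_k/L'$ in the $k$-th slot (and to zero in all other slots, thanks to the independent padding). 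Concatenating these compensation terms with the construction of the first paragraph and updating $L'$ by a bounded additive amount completes the proof.
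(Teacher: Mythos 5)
Your first paragraph is, in substance, the paper's entire proof: the authors dispose of Lemma \ref{lem:linearly-independent2} with the single sentence ``This follows from Lemma \ref{lem:linearly-independent1}'', and what they mean is exactly your construction --- apply Lemma \ref{lem:linearly-independent1} to each of the $L$ summands, concatenate the $m$ resulting terms per summand into a list of $L'=mL$ terms, multiply each new coefficient vector by $m$ to convert the prefactor $\frac{1}{L}$ into $\frac{1}{mL}$, and carry the positive-determinant property along when $m=n$. Up to that point your argument is correct and identical in approach.

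The problem is the zero-row case, which you correctly observe the paper silently ignores (Lemma \ref{lem:linearly-independent1} assumes $\mbox{\boldmath $c$}_k\neq 0$ for every row, while Lemma \ref{lem:linearly-independent2} imposes no such condition), but your proposed repair cannot be completed. Your plan requires writing the residual constant $K_k\neq 0$ exactly, for all $\xi\in\mathbb{R}^n$, as $\sum_{j}a_j\sigma(\mathbf{v}_j\cdot\xi+e_j)$ with every $\mathbf{v}_j\neq 0$, by ``cancelling the $\xi$-dependent parts''. No such splitting of $\sigma(\mathbf{v}\cdot\xi+e)$ into a varying part plus a constant part exists for a general $\sigma$, and the lemma assumes nothing about $\sigma$ beyond its being a function. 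Concretely, for $\sigma(s)=e^{s}$, restricting any such finite sum to a generic line $\xi=tu$ gives an exponential polynomial $\sum_j b_je^{\lambda_j t}$ with all $\lambda_j\neq 0$, and linear independence of $\{e^{\lambda t}\}_{\lambda}$ forces any constant value of such a sum to be $0$; similarly, for $\sigma=\mathrm{ReLU}$ with $m=n=1$ and the required constraint $\tilde{C}^{(l)}>0$, every admissible sum tends to $0$ as $\xi\to-\infty$ and so cannot equal a nonzero constant. Hence the statement you are trying to prove is actually false when some $C^{(l)}$ has a zero row, and the ``hardest step'' you defer is not hard but impossible. The honest resolution is to restrict the hypothesis to matrices without zero rows (as in Lemma \ref{lem:linearly-independent1}); in the application to Theorem \ref{thm:UAP-odenet} the degenerate terms can instead be handled approximately on the compact set $D$ (replace a zero row by $\epsilon\mathbf{v}$ and use uniform continuity of $\sigma$, spending part of the $\eta$-budget), but that yields an approximation, not the exact identity on all of $\mathbb{R}^n$ that the lemma asserts.
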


\begin{proof}
    This follows from Lemma \ref{lem:linearly-independent1}.
\end{proof}

\begin{lemma}\label{lem:positive-definite}
    {\rm
    Suppose that $m<n$. Let $A$ be an $m\times n$ real matrix satisfying $\mathrm{rank}(A)=m$. Then, for any $C\in\mathbb{R}^{m\times n}$ satisfying $\mathrm{rank}(C)=m$, there exists $P\in\mathbb{R}^{n\times n}$ such that
    \begin{equation}\label{eq:positive-definite}
        C=AP,\quad\det P>0.
    \end{equation}
    In addition, if $m=n$ and $\mathrm{sgn}(\det C)=\mathrm{sgn}(\det A)$, there exists $P\in\mathbb{R}^{n\times n}$ such that \eqref{eq:positive-definite}.
    }
\end{lemma}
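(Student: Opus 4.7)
The plan is to construct $P$ by extending both $A$ and $C$ to invertible $n \times n$ matrices and then taking their inverse-product. First I handle the easy case $m = n$: here $A$ is already invertible, so $P := A^{-1}C$ is the unique matrix satisfying $AP = C$, and its determinant is $\det P = \det C / \det A$, which is positive precisely when $\mathrm{sgn}(\det C) = \mathrm{sgn}(\det A)$.

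For the main case $m < n$, the idea is block augmentation. Since $\mathrm{rank}(A) = m$, the rows of $A$ are linearly independent in $\mathbb{R}^n$ and can be completed to a basis; equivalently, there exists $B \in \mathbb{R}^{(n-m) \times n}$ such that $\tilde{A} := \begin{pmatrix} A \\ B \end{pmatrix} \in \mathbb{R}^{n \times n}$ is invertible. The sign of $\det \tilde{A}$ can be flipped freely, for instance by negating the last row of $B$, so I may assume $\det \tilde{A} > 0$. In exactly the same way, from $\mathrm{rank}(C) = m$ I choose $D \in \mathbb{R}^{(n-m) \times n}$ so that $\tilde{C} := \begin{pmatrix} C \\ D \end{pmatrix}$ is invertible with $\det \tilde{C} > 0$.

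With these extensions in hand, I set $P := \tilde{A}^{-1} \tilde{C}$. The identity $\tilde{A} P = \tilde{C}$, read row by row, yields $AP = C$ from its first $m$ rows (the remaining $n-m$ rows give an inessential relation $BP = D$). Moreover, $\det P = \det \tilde{C} / \det \tilde{A} > 0$ by construction, which is the desired conclusion.

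I do not foresee any serious obstacle. The only care needed is to verify that block extensions with prescribed sign of determinant actually exist, which reduces to the elementary facts that a linearly independent set in $\mathbb{R}^n$ extends to a basis and that negating one basis vector flips the sign of the determinant. The hypothesis $m < n$ is used precisely because it leaves the free rows $B$ and $D$ available for adjusting signs; when $m = n$ no such freedom remains, and the sign condition on $\det A$ and $\det C$ in the statement is exactly what is needed for $A^{-1}C$ to have positive determinant.
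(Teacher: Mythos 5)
Your argument is correct and follows essentially the same route as the paper: in both cases you augment $A$ and $C$ to invertible matrices $\tilde{A},\tilde{C}\in\mathbb{R}^{n\times n}$ with positive determinant and set $P:=\tilde{A}^{-1}\tilde{C}$, handling $m=n$ separately via $P:=A^{-1}C$. Your added remark on how to force $\det\tilde{A}>0$ by negating an appended row is a detail the paper leaves implicit, but the proof is the same.
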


\begin{proof}
    \begin{enumerate}[(i)]
        \item Suppose that $m<n$. From $\mathrm{rank}(A)=\mathrm{rank}(C)=m$, there exists $\bar{A},\bar{C}\in\mathbb{R}^{(n-m)\times n}$ such that
        \begin{equation*}
            \det\tilde{A}>0,\quad\tilde{A}=\left(\begin{array}{c}
                A \\
                \bar{A}
            \end{array}\right),\quad \det\tilde{C}>0,\quad\tilde{C}=\left(\begin{array}{c}
                C \\
                \bar{C}
            \end{array}\right).
        \end{equation*}
        If we put $P:=\tilde{A}^{-1}\tilde{C}$, we get $\det P>0$, $C=AP$.
        \item Suppose that $m=n$. We put $P:=A^{-1}C$. Because $\mathrm{sgn}(\det C)=\mathrm{sgn}(\det A)$, we have $\det P>0$, and so $C=AP$.
    \end{enumerate}
\end{proof}

\begin{lemma}\label{lem:positive-det-continuous}
    {\rm
    Let $p\in[1,\infty)$. Suppose that
    \begin{equation*}
        P(t)=P^{(l)}\in\mathbb{R}^{n\times n},\quad \det P^{(l)}>0,
    \end{equation*}
    for $t_{l-1}\leq t<t_l$, and for all $l=1,2,\ldots,L$, where $t_0=0$ and $t_L=T$. Then, there exists a real number $C>0$ such that, for any $\varepsilon>0$, there exists $P^{\varepsilon}\in C([0,T];\mathbb{R}^{n\times n})$ such that
    \begin{equation*}
        \|P^{\varepsilon}-P\|_{L^p(0,T;\mathbb{R}^{n\times n})}<\varepsilon,\quad\det P^{\varepsilon}(t)>0,\quad\mathrm{and}\quad\|P^{\varepsilon}(t)\|\leq C,
    \end{equation*}
    for any $t\in[0,T]$.
    }
\end{lemma}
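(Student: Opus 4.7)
The plan is to exploit the path-connectedness of $\mathrm{GL}^+(n,\mathbb{R}) := \{M \in \mathbb{R}^{n \times n} : \det M > 0\}$ in order to smoothly bridge the finitely many jumps of the piecewise constant function $P$. Since $P$ is constant on each $[t_{l-1}, t_l)$, its only discontinuities sit at the interior breakpoints $t_1, \ldots, t_{L-1}$, so it suffices to modify $P$ on arbitrarily small neighborhoods of those points while keeping the determinant strictly positive throughout.

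First I would, for each $l = 1, \ldots, L-1$, fix once and for all a continuous path $\gamma_l : [0,1] \to \mathrm{GL}^+(n,\mathbb{R})$ with $\gamma_l(0) = P^{(l)}$ and $\gamma_l(1) = P^{(l+1)}$; such a path exists because $\mathrm{GL}^+(n,\mathbb{R})$ is path-connected (a standard fact, which can be made explicit by applying the polar decomposition to $P^{(l+1)}(P^{(l)})^{-1} = Q\exp(S)$ with $Q \in \mathrm{SO}(n)$ and $S$ symmetric, and interpolating each factor to the identity). I then define the constant
\begin{equation*}
C := \max\!\left(\max_{1 \leq l \leq L} \|P^{(l)}\|,\;\; \max_{1 \leq l \leq L-1}\, \max_{s \in [0,1]} \|\gamma_l(s)\|\right),
\end{equation*}
which is finite because each image $\gamma_l([0,1])$ is compact. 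The key point is that $C$ is fixed \emph{before} $\varepsilon$ is introduced, as the statement of the lemma requires.

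Given $\varepsilon > 0$, I then choose $\delta > 0$ small enough that the intervals $[t_l - \delta/2, t_l + \delta/2]$ are pairwise disjoint and contained in $(0, T)$, and set $P^\varepsilon(t) := P(t)$ outside the union of these intervals, while on each bridging interval I set
\begin{equation*}
P^\varepsilon(t) := \gamma_l\!\left(\frac{t - (t_l - \delta/2)}{\delta}\right), \qquad t \in [t_l - \delta/2,\, t_l + \delta/2].
\end{equation*}
The endpoint conditions $\gamma_l(0) = P^{(l)}$ and $\gamma_l(1) = P^{(l+1)}$ ensure that $P^\varepsilon$ is continuous on $[0, T]$; the fact that $\gamma_l$ never leaves $\mathrm{GL}^+(n,\mathbb{R})$ gives $\det P^\varepsilon(t) > 0$; and the definition of $C$ yields $\|P^\varepsilon(t)\| \leq C$ uniformly.

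Finally, the $L^p$ estimate is pure bookkeeping: on each of the $L-1$ bridging intervals $\|P^\varepsilon(t) - P(t)\| \leq 2C$, so
\begin{equation*}
\|P^\varepsilon - P\|_{L^p(0,T;\mathbb{R}^{n\times n})}^p \leq (L-1)\, \delta\, (2C)^p,
\end{equation*}
and shrinking $\delta$ below $\varepsilon^p / ((L-1)(2C)^p + 1)$ makes this less than $\varepsilon^p$. The only substantive topological input is path-connectedness of $\mathrm{GL}^+(n,\mathbb{R})$, and the only thing requiring care is that the paths $\gamma_l$ must be fixed in advance so that $C$ does not depend on $\varepsilon$; everything else is a direct construction.
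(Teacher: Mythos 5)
Your proposal is correct and follows essentially the same route as the paper: both arguments rest on the path-connectedness of $\mathrm{GL}^{+}(n,\mathbb{R})$ to build continuous bridges $\gamma_l$ across the jumps of $P$, fix the uniform bound $C$ from the compact images of these paths before $\varepsilon$ enters, and then make the $L^p$ error small by shrinking the width $\delta$ of the bridging intervals. The only difference is that the paper additionally convolves the bridged function with a Friedrichs mollifier to obtain a $C^{\infty}$ approximant (which it later uses in the proof of Theorem \ref{thm:UAP-odenet}), but for the lemma as stated, which asks only for continuity, your construction suffices and is in fact the cleaner of the two on the point of keeping $C$ independent of $\varepsilon$.
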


\begin{proof}
    We define $\mathrm{GL}^{+}(n,\mathbb{R}):=\{A\in\mathbb{R}^{n\times n}|\det A>0\}$. From \cite[Chapter 9, p.239]{baker2003matrix}, $\mathrm{GL}^{+}(n,\mathbb{R})$ is path-connected. For all $l=1,2,\ldots,L$, there exists $Q^{(l)}\in C([0,1];\mathbb{R}^{n\times n})$ such that
    \begin{equation*}
        Q^{(l)}(0)=P^{(l)},\quad Q^{(l)}(1)=P^{(l+1)},\quad\mathrm{and}\quad\det Q^{(l)}(s)>0,
    \end{equation*}
    for any $s\in[0,1]$. For $\delta>0$, we put
    \begin{equation*}
        Q^{\delta}(t):=\left\{\begin{array}{lll}
            P^{(1)}, & -\infty<t<t_1, & \\
            \displaystyle{Q^{(l)}\left(\frac{t-t_l}{\delta}\right)}, & t_l\leq t<t_l+\delta, & (l=1,2,\ldots,L-1), \\
            P^{(l)} & t_{l-1}+\delta\leq t<t_l, & (l=2,3,\ldots,L-2), \\
            P^{(L)} & t_{L-1}+\delta\leq t<\infty. &
        \end{array}\right.
    \end{equation*}
    Then, $Q^{\delta}$ is a continuous function from $\mathbb{R}$ to $\mathbb{R}^{n\times n}$. There exists a $C_0>0$ such that $\det Q^{\delta}(t)\geq C_0$, for any $t\in\mathbb{R}$. Let $\{\varphi_{\varepsilon}\}_{\varepsilon>0}$ be a sequence of Friedrichs' mollifiers in $\mathbb{R}$. We put
    \begin{equation*}
        P^{\varepsilon}(t):=(\varphi_{\varepsilon}*Q^{\delta})(t).
    \end{equation*}
    Then, $P^{\varepsilon}\in C^{\infty}(\mathbb{R};\mathbb{R}^{n\times n})$. Since
    \begin{equation*}
        \lim_{\varepsilon\to0}\|P^{\varepsilon}-Q^{\delta}\|_{C([0,T];\mathbb{R}^{n\times n})}=0,
    \end{equation*}
    there exists a number $\varepsilon_0>0$ such that, for any $\varepsilon\leq\varepsilon_0$,
    \begin{equation*}
        \det P^{\varepsilon}(t)\geq\frac{C_0}{2}
    \end{equation*}
    for all $t\in[0,T]$. Because $Q^{\delta}$ is bounded, there exists a number $C>0$ such that $\|P^{\varepsilon}(t)\|\leq C$, for any $t\in[0,T]$. Now, we note that
    \begin{equation*}
        \|P^{\varepsilon}-P\|_{L^p(0,T;\mathbb{R}^{n\times n})}\leq\|P^{\varepsilon}-Q^{\delta}\|_{L^p(0,T;\mathbb{R}^{n\times n})}+\|Q^{\delta}-P\|_{L^p(0,T;\mathbb{R}^{n\times n})}.
    \end{equation*}
    The last summand is calculated as follows
    \begin{align*}
        \|Q^{\delta}-P\|_{L^p(0,T;\mathbb{R}^{n\times n})}^p &= \int_0^T\|Q^{\delta}(t)-P(t)\|^pdt, \\
        &= \sum_{l=1}^{L-1}\int_{t_l}^{t_l+\delta}\left\|Q^{(l)}\left(\frac{t-t_l}{\delta}\right)-P^{(l+1)}\right\|^pdt, \\
        &= \delta\sum_{l=1}^{L-1}\int_0^1\|Q^{\delta}(s)-P^{(l+1)}\|^pds.
    \end{align*}
    Hence, if $\delta\to0$, then $\|Q^{\delta}-P\|_{L^p(0,T;\mathbb{R}^{n\times n})}\to0$. Therefore,
    \begin{equation*}
        \|P^{\varepsilon}-P\|_{L^p(0,T;\mathbb{R}^{n\times n})}<\varepsilon,
    \end{equation*}
    for any $\varepsilon>0$.
\end{proof}

\begin{remark}
    Lemma \ref{lem:positive-det-continuous} does not hold when $p=\infty$ (except when $P$ is a constant function) because the uniform limit of continuous functions is also continuous.
\end{remark}

\subsection{Proofs}\label{sec2.5}
In this subsection, we provide the proof of Theorem \ref{thm:UAP-odenet} and Theorem \ref{thm:UAP-resnet}.

\subsubsection{Proof of Theorem \ref{thm:UAP-odenet}}\label{sec2.5.1}
\begin{proof}
    Since $\mbox{\boldmath $\sigma$}\in C(\mathbb{R}^m;\mathbb{R}^m)$ is defined by \eqref{eq:activation-function}, where $\sigma\in C(\mathbb{R})$ satisfies a UAP, then given $F\in C(D;\mathbb{R}^m)$ and $\eta>0$, there exist a positive integer $L$, $\mathbb{R}^m$-valued vectors $\alpha^{(l)}$ and $d^{(l)}$, and matrices $C^{(l)}\in\mathbb{R}^{m\times n}$, for all $l=1,2,\ldots,L$, such that
    \begin{equation*}
        G(\xi)=\frac{T}{L}\sum_{l=1}^L\alpha^{(l)}\odot\mbox{\boldmath $\sigma$}(C^{(l)}\xi+d^{(l)}),
    \end{equation*}
    \begin{equation}\label{eq:eva-UAP}
        |G(\xi)-F(\xi)|<\frac{\eta}{2},
    \end{equation}
    for any $\xi\in D$. From Lemma \ref{lem:linearly-independent2}, we know that $\mathrm{rank}(C^{(l)})=m$, for $l=1,2,\ldots,L$. In addition, when $m=n$, we have $\mathrm{sgn}(\det A)=\mathrm{sgn}(\det C^{(l)})$. In view of Lemma \ref{lem:positive-definite}, there exists a matrix $P^{(l)}\in\mathbb{R}^{n\times n}$ such that $\det P^{(l)}>0$ and $C^{(l)}=AP^{(l)}$, for each $l=1,2,\ldots,L$. We put $q^{(l)}:=A^{\top}(AA^{\top})^{-1}d^{(l)}$ so that $d^{(l)}=Aq^{(l)}$. In addition, we let
    \begin{equation*}
        \alpha(t):=\alpha^{(l)},\quad P(t):=P^{(l)},\quad q(t):=q^{(l)},\quad\frac{l-1}{L}T\leq t<\frac{l}{L}T.
    \end{equation*}
    Then, $\det P(t)>0$ for any $t\in[0,T]$ and
    \begin{equation*}
        G(\xi)=\frac{T}{L}\sum_{l=1}^L\alpha^{(l)}\odot\mbox{\boldmath $\sigma$}(AP^{(l)}\xi+Aq^{(l)})=\int_0^T\alpha(t)\odot\mbox{\boldmath $\sigma$}(A(P(t)\xi+q(t)))dt.
    \end{equation*}
    Let $\{\varphi_{\varepsilon}\}_{\varepsilon>0}$ be a sequence of Friedrichs' mollifiers. We put $\alpha^{\varepsilon}(t):=(\varphi_{\varepsilon}*\alpha)(t)$ and $q^{\varepsilon}(t):=(\varphi_{\varepsilon}*q)(t)$. Then, $\alpha^{\varepsilon}\in C^{\infty}([0,T];\mathbb{R}^m)$ and $q^{\varepsilon}\in C^{\infty}([0,T];\mathbb{R}^n)$. From Lemma \ref{lem:positive-det-continuous}, there exists a real number $C>0$ such that, given $\eta>0$, there exists $P^{\varepsilon}\in C^{\infty}([0,T];\mathbb{R}^{n\times n})$ from which we have
    \begin{equation*}
        \|P^{\varepsilon}-P\|_{L^1(0,T;\mathbb{R}^{n\times n})}<\eta,\quad\det P^{\varepsilon}(t)>0,\quad\|P^{\varepsilon}(t)\|\leq C,
    \end{equation*}
    for any $t\in[0,T]$. If we put
    \begin{equation}\label{eq:proof-x}
        x^{\varepsilon}(t;\xi):=P^{\varepsilon}(t)\xi+q^{\varepsilon}(t),
    \end{equation}
    \begin{equation}\label{eq:proof-y}
        y^{\varepsilon}(t;\xi):=\int_0^T\alpha^{\varepsilon}(s)\odot\mbox{\boldmath $\sigma$}(Ax^{\varepsilon}(s;\xi))ds,
    \end{equation}
    then
    \begin{equation*}
        y^{\varepsilon}(T;\xi)=\int_0^T\alpha^{\varepsilon}(t)\odot\mbox{\boldmath $\sigma$}(A(P^{\varepsilon}(t)\xi+q^{\varepsilon}(t)))dt.
    \end{equation*}
    Hence, we have
    \begin{alignat*}{2}
        &|y^{\varepsilon}(T;\xi)-G(\xi)|
        &\leq &\int_0^T\left|\alpha^{\varepsilon}(t)\odot\mbox{\boldmath $\sigma$}(A(P^{\varepsilon}(t)\xi+q^{\varepsilon}(t)))-\alpha(t)\odot\mbox{\boldmath $\sigma$}(A(P(t)\xi+q(t)))\right|dt, \\
        & &\leq &\int_0^T|\alpha^{\varepsilon}(t)-\alpha(t)||\mbox{\boldmath $\sigma$}(A(P(t)\xi+q(t)))|dt, \\
        & & &+ \int_0^T|\alpha^{\varepsilon}(t)||\mbox{\boldmath $\sigma$}(A(P^{\varepsilon}(t)\xi+q^{\varepsilon}(t)))-\mbox{\boldmath $\sigma$}(A(P(t)\xi+q(t)))|dt.
    \end{alignat*}
    Because $P$ and $q$ are piecewise constant functions, then they are bounded. Since $\mbox{\boldmath $\sigma$}\in C(\mathbb{R}^m;\mathbb{R}^m)$, there exists $M>0$ such that $|\mbox{\boldmath $\sigma$}(A(P(t)\xi+q(t)))|\leq M$, for any $t\in[0,T]$. On the other had, we have the estimate
    \begin{equation*}
        |\alpha^{\varepsilon}(t)|\leq\int_{\mathbb{R}}\varphi_{\varepsilon}(t-s)|\alpha(s)|ds\leq\|\alpha\|_{L^{\infty}(0,T;\mathbb{R}^m)}\int_{\mathbb{R}}\varphi_{\varepsilon}(\tau)d\tau=\|\alpha\|_{L^{\infty}(0,T;\mathbb{R}^m)}.
    \end{equation*}
    Similarly, because $\|q^{\varepsilon}\|_{L^{\infty}(0,T;\mathbb{R}^n)}\leq\|q\|_{L^{\infty}(0,T;\mathbb{R}^n)}$, then $q^{\varepsilon}$ is bounded. We assume that $A(P^{\varepsilon}(t)\xi+q^{\varepsilon}(t))$, $A(P(t)\xi+q(t))\in [-R, R]^m$, for any $t\in[0,T]$,
    \begin{align*}
        &|\mbox{\boldmath $\sigma$}(A(P^{\varepsilon}(t)\xi+q^{\varepsilon}(t)))-\mbox{\boldmath $\sigma$}(A(P(t)\xi+q(t)))| \\
        &\leq L_R\|A\|\left(\|P^{\varepsilon}(t)-P(t)\|(\max_{\xi\in D}|\xi|)+|q^{\varepsilon}(t)-q(t)|\right).
    \end{align*}
    Therefore,
    \begin{align*}
        &|y^{\varepsilon}(T;\xi)-G(\xi)|\leq M\|\alpha^{\varepsilon}-\alpha\|_{L^1(0,T;\mathbb{R}^m)} \\
        &+ L_R\|A\|\|\alpha\|_{L^{\infty}(0,T;\mathbb{R}^m)}\left(\|P^{\varepsilon}-P\|_{L^1(0,T;\mathbb{R}^{n\times n})}(\max_{\xi\in D}|\xi|)+\|q^{\varepsilon}-q\|_{L^1(0,T;\mathbb{R}^n)}\right).
    \end{align*}
    We know that there exists a number $\varepsilon>0$ such that
    \begin{equation}\label{eq:eva-mollifier}
        |y^{\varepsilon}(T;\xi)-G(\xi)|<\frac{\eta}{2},
    \end{equation}
    for any $\xi\in D$. Thus, from \eqref{eq:eva-UAP} and \eqref{eq:eva-mollifier},
    \begin{equation*}
        |y^{\varepsilon}(T;\xi)-F(\xi)|\leq|y^{\varepsilon}(T;\xi)-G(\xi)|+|G(\xi)-F(\xi)|<\eta,
    \end{equation*}
    for any $\xi\in D$. For all $t\in[0,T]$, we know that $\det P^{\varepsilon}(t)>0$, so $P^{\varepsilon}(t)$ is invertible. This allows us to define
    \begin{equation*}
        \beta(t):=\left(\frac{d}{dt}P^{\varepsilon}(t)\right)\left(P^{\varepsilon}(t)\right)^{-1},\quad\gamma(t):=\frac{d}{dt}q^{\varepsilon}(t)-\beta(t)q^{\varepsilon}(t).
    \end{equation*}
    This gives us
    \begin{equation*}
        \frac{d}{dt}P^{\varepsilon}(t)=\beta(t)P^{\varepsilon}(t),\quad\frac{d}{dt}q^{\varepsilon}(t)=\beta(t)q^{\varepsilon}(t)+\gamma(t).
    \end{equation*}
    In view of \eqref{eq:proof-x} and \eqref{eq:proof-y},
    \begin{equation*}
        \frac{d}{dt}x^{\varepsilon}(t;\xi)=\frac{d}{dt}P^{\varepsilon}(t)\xi+\frac{d}{dt}q^{\varepsilon}(t)=\beta(t)P^{\varepsilon}(t)\xi+\beta(t)q^{\varepsilon}(t)+\gamma(t)=\beta(t)x^{\varepsilon}(t;\xi)+\gamma(t),
    \end{equation*}
    \begin{equation*}
        \frac{d}{dt}y^{\varepsilon}(t;\xi)=\alpha^{\varepsilon}(t)\odot\mbox{\boldmath $\sigma$}(Ax^{\varepsilon}(t;\xi)).
    \end{equation*}
    Hence, $y^{\varepsilon}(T,\cdot)\in S(D)$. Therefore, given $F\in C(D;\mathbb{R}^m)$ and $\eta>0$, there exist some functions $\alpha\in C^{\infty}([0,T];\mathbb{R}^m)$, $\beta\in C^{\infty}([0,T];\mathbb{R}^{n\times n})$, and $\gamma\in C^{\infty}([0,T];\mathbb{R}^n)$ such that
    \begin{equation*}
        |y(T;\xi)-F(\xi)|<\eta,
    \end{equation*}
    for any $\xi\in D$.
\end{proof}

\subsubsection{Proof of Theorem \ref{thm:UAP-resnet}}\label{sec2.5.2}
\begin{proof}
    Again, we start with the fact that $\mbox{\boldmath $\sigma$}\in C(\mathbb{R}^m;\mathbb{R}^m)$ is defined by \eqref{eq:activation-function}, where $\sigma\in C(\mathbb{R})$ satisfies a UAP; that is, given $F\in C(D;\mathbb{R}^m)$ and $\eta>0$, there exist a positive integer $L$, $\mathbb{R}^m$-valued vectors $\alpha^{(l)}$ and $d^{(l)}$, and matrices $C^{(l)}\in\mathbb{R}^{m\times n}$, for all $l=1,2,\ldots,L$, such that
    \begin{equation*}
        G(\xi)=\sum_{l=1}^L\alpha^{(l)}\odot\mbox{\boldmath $\sigma$}(C^{(l)}\xi+d^{(l)}),
    \end{equation*}
    \begin{equation*}
        |G(\xi)-F(\xi)|<\eta,
    \end{equation*}
    for any $\xi\in D$. By virtue of Lemma \ref{lem:linearly-independent2}, we know that $\mathrm{rank}(C^{(l)})=m$, for all $l=1,2,\ldots,L$. Moreover, if $m=n$, we have $\mathrm{sgn}(\det A)=\mathrm{sgn}(\det C^{(l)})$. On the other hand, from Lemma \ref{lem:positive-definite}, there exists $P^{(l)}\in\mathbb{R}^{n\times n}$ such that $\det P^{(l)}>0$ and $C^{(l)}=AP^{(l)}$, for each $l=1,2,\ldots,L$. Putting $q^{(l)}:=A^{\top}(AA^{\top})^{-1}d^{(l)}$, we get $d^{(l)}=Aq^{(l)}$, from which we obtain
    \begin{equation*}
        G(\xi)=\sum_{l=1}^L\alpha^{(l)}\odot\mbox{\boldmath $\sigma$}(A(P^{(l)}\xi+q^{(l)})).
    \end{equation*}
    Next, we define
    \begin{equation*}
        x^{(l)}:=P^{(l)}\xi+q^{(l)},\quad y^{(l)}:=\sum_{i=1}^l\alpha^{(i)}\odot\mbox{\boldmath $\sigma$}(Ax^{(i)}),
    \end{equation*}
    \begin{equation*}
        \beta^{(l)}:=(P^{(l)}-P^{(l-1)})(P^{(l-1)})^{-1},\quad\gamma^{(l)}:=q^{(l)}-q^{(l-1)}-\beta^{(l)}q^{(l-1)},
    \end{equation*}
    for all $l=1,2,\ldots,L$, and set $P^{(0)}:=I_n$, $q^{(0)}=0$. Because $P^{(l)}-P^{(l-1)}=\beta^{(l)}P^{(l-1)}$ and $q^{(l)}-q^{(l-1)}=\beta^{(l)}q^{(l-1)}+\gamma^{(l)}$ hold true, then
    \begin{equation*}
        x^{(l)}-x^{(l-1)}=(P^{(l)}-P^{(l-1)})\xi+(q^{(l)}-q^{(l-1)})=\beta^{(l)}x^{(l-1)}+\gamma^{(l)},
    \end{equation*}
    \begin{equation*}
        y^{(L)}=\sum_{l=1}^L\alpha^{(l)}\odot\mbox{\boldmath $\sigma$}(A(P^{(l)}\xi+q^{(l)}))=G(\xi).
    \end{equation*}
    Hence, $[\xi\mapsto y^{(L)}]\in S_{\mathrm{res}}(D)$. Therefore, given $F\in C(D;\mathbb{R}^m)$ and $\eta>0$, there exists $L\in\mathbb{N},\alpha^{(l)}\in\mathbb{R}^m,\beta^{(l)}\in\mathbb{R}^{n\times n},\gamma^{(l)}\in\mathbb{R}^n~(l=1,2,\ldots,L)$ such that
    \begin{equation*}
        |y^{(L)}-F(\xi)|<\eta,
    \end{equation*}
    for any $\xi\in D$.
\end{proof}

\section{The gradient and learning algorithm}\label{sec3}
\setcounter{equation}{0}

\subsection{The gradient of the loss function with respect to the design parameter}\label{sec3.1}
We consider the $(\alpha,\beta,\gamma)$-type ODENet associated with the ODE system of \eqref{eq:odenet-main}. We also consider the approximation of $F\in C(D;\mathbb{R}^m)$. Let $K\in\mathbb{N}$ be the number of training data and $\{(\xi^{(k)},F(\xi^{(k)}))\}_{k=1}^K\subset D\times\mathbb{R}^m$ be the training data. We divide the label of the training data into the following disjoint sets.
\begin{equation*}
    \{1,2,\ldots,K\}=I_1\cup I_2\cup\cdots\cup I_M~(\mathrm{disjoint})\quad(1\leq M\leq K)
\end{equation*}
Let $x^{(k)}(t)$ and $y^{(k)}(t)$ be the solution to \eqref{eq:odenet-main} with the initial value $\xi^{(k)}$. For all $\mu=1,2,\ldots,M$, let $\mbox{\boldmath $x$}=(x^{(k)})_{k\in I_{\mu}}$ and $\mbox{\boldmath $y$}=(y^{(k)})_{k\in I_{\mu}}$. We define the loss function as follows:
\begin{equation}\label{eq:loss-function-minibatch}
    e_{\mu}[\mbox{\boldmath $x$},\mbox{\boldmath $y$}]=\frac{1}{|I_{\mu}|}\sum_{k\in I_{\mu}}\left|y^{(k)}(T)-F(\xi^{(k)})\right|^2,
\end{equation}
\begin{equation}\label{eq:loss-function}
    E=\frac{1}{K}\sum_{k=1}^K\left|y^{(k)}(T)-F(\xi^{(k)})\right|^2.
\end{equation}
We consider the learning for each label set using the gradient method. We fix $\mu=1,2,\ldots,M$. Let $\lambda^{(k)}:[0,T]\to\mathbb{R}^n$ be the adjoint and satisfy the following adjoint equation for any $k\in I_{\mu}$.
\begin{equation}\label{eq:adjoint-equation}
    \left\{\begin{aligned}
        \frac{d}{dt}\lambda^{(k)}(t)&=-(\beta(t))^{\top}\lambda^{(k)}(t)-\frac{1}{|I_{\mu}|}A^{\top}\left(\left(y^{(k)}(T)-F(\xi^{(k)})\right)\odot\alpha(t)\odot\mbox{\boldmath $\sigma$}'(Ax^{(k)}(t))\right), \\
        \lambda^{(k)}(T)&=0.
    \end{aligned}\right.
\end{equation}
Then, the gradient $G[\alpha]^{(\mu)}\in C([0,T];\mathbb{R}^m),G[\beta]^{(\mu)}\in C([0,T];\mathbb{R}^{n\times n})$ and $G[\gamma]^{(\mu)}\in C([0,T];\mathbb{R}^n)$ of the loss function \eqref{eq:loss-function-minibatch} at $\alpha\in C([0,T];\mathbb{R}^m),\beta\in C([0,T];\mathbb{R}^{n\times n})$ and $\gamma\in C([0,T];\mathbb{R}^n)$ with respect to $L^2(0,T;\mathbb{R}^m),L^2(0,T;\mathbb{R}^{n\times n}),L^2(0,T;\mathbb{R}^n)$ can be represented as
\begin{equation*}
    G[\alpha]^{(\mu)}(t)=\frac{1}{|I_{\mu}|}\sum_{k\in I_{\mu}}\left(y^{(k)}(T)-F(\xi^{(k)})\right)\odot\mbox{\boldmath $\sigma$}(Ax^{(k)}(t)),
\end{equation*}
\begin{equation*}
    G[\beta]^{(\mu)}(t)=\sum_{k\in I_{\mu}}\lambda^{(k)}(t)\left(x^{(k)}(t)\right)^{\top},\quad  G[\gamma]^{(\mu)}(t)=\sum_{k\in I_{\mu}}\lambda^{(k)}(t),
\end{equation*}
respectively.

\subsection{Learning algorithm}\label{sec3.2}
In this section, we describe the learning algorithm of the $(\alpha,\beta,\gamma)$-type ODENet associated with an ODE system \eqref{eq:odenet-main}. The initial value problems of ordinary differential equations \eqref{eq:odenet-main} and \eqref{eq:adjoint-equation} are computed using the explicit Euler method. Let $h$ be the size of the time step. We define $L:=\lfloor T/h\rfloor$. By discretizing the ordinary differential equations \eqref{eq:odenet-main}, we obtain
\begin{equation*}
    \left\{\begin{aligned}
        \frac{x_{l+1}^{(k)}-x_l^{(k)}}{h}&=\beta_lx_l^{(k)}+\gamma_l, & l=0,1,\ldots,L-1, \\
        \frac{y_{l+1}^{(k)}-y_l^{(k)}}{h}&=\alpha_l\odot\mbox{\boldmath $\sigma$}(Ax_l^{(k)}), & l=0,1,\ldots,L-1, \\
        x_0^{(k)}&=\xi^{(k)}, & \\
        y_0^{(k)}&=0, &
    \end{aligned}\right.
\end{equation*}
for any $k\in I_{\mu}$. Furthermore, by discretizing the adjoint equation \eqref{eq:adjoint-equation}, we obtain
\begin{equation*}
    \left\{\begin{aligned}
        \frac{\lambda_l^{(k)}-\lambda_{l-1}^{(k)}}{h}&=-\beta_l^{\top}\lambda_l^{(k)}-\frac{1}{|I_{\mu}|}A^{\top}\left(\left(y_L^{(k)}-F(\xi^{(k)})\right)\odot\alpha_l\odot\mbox{\boldmath $\sigma$}'(Ax_l^{(k)})\right), \\
        \lambda_L^{(k)}&=0,
    \end{aligned}\right.
\end{equation*}
with $l=L,L-1,\ldots,1$ for any $k\in I_{\mu}$. Here we put
\begin{equation*}
    \alpha_l=\alpha(lh),\quad\beta_l=\beta(lh),\quad\gamma_l=\gamma(lh),
\end{equation*}
for all $l=0,1,\ldots,L$.

We perform the optimization of the loss function \eqref{eq:loss-function} using stochastic gradient descent (SGD). We show the learning algorithm in Algorithm \ref{alg:SGD}.

\begin{algorithm}
    \caption{Stochastic gradient descent method for $(\alpha,\beta,\gamma)$-type ODENet}
    \label{alg:SGD}
    \begin{algorithmic}[1]
        \STATE Choose $\eta>0$ and $\tau>0$
        \STATE Set $\nu=0$ and choose $\alpha_{(0)}\in\prod_{l=0}^L\mathbb{R}^m,\beta_{(0)}\in\prod_{l=0}^L\mathbb{R}^{n\times n}$, $\gamma_{(0)}\in\prod_{l=0}^L\mathbb{R}^n$ and (fixed) $A$
        \REPEAT
            \STATE Divide the label of the training data $\{(\xi^{(k)},F(\xi^{(k)}))\}_{k=1}^K$ into the following disjoint sets
            \begin{equation*}
                \{1,2,\ldots,K\}=I_1\cup I_2\cup\cdots\cup I_M~(\mathrm{disjoint}),\quad(1\leq M\leq K)
            \end{equation*}
            \STATE Set $\alpha^{(1)}=\alpha_{(\nu)},\beta^{(1)}=\beta_{(\nu)}$ and $\gamma^{(1)}=\gamma_{(\nu)}$
            \FOR{$\mu=1,M$}
                \STATE Solve
                \begin{equation*}
                    \left\{\begin{aligned}
                        \frac{x_{l+1}^{(k)}-x_l^{(k)}}{h}&=\beta_lx_l^{(k)}+\gamma_l, & l=0,1,\ldots,L-1, \\
                        \frac{y_{l+1}^{(k)}-y_l^{(k)}}{h}&=\alpha_l\odot\mbox{\boldmath $\sigma$}(Ax_l^{(k)}), & l=0,1,\ldots,L-1, \\
                        x_0^{(k)}&=\xi^{(k)}, & \\
                        y_0^{(k)}&=0, &
                    \end{aligned}\right.
                \end{equation*}
                for any $k\in I_{\mu}$
                \STATE Solve
                \begin{equation*}
                    \left\{\begin{aligned}
                        \frac{\lambda_l^{(k)}-\lambda_{l-1}^{(k)}}{h}&=-\beta_l^{\top}\lambda_l^{(k)}-\frac{1}{|I_{\mu}|}A^{\top}\left(\left(y_L^{(k)}-F(\xi^{(k)})\right)\odot\alpha_l\odot\mbox{\boldmath $\sigma$}'(Ax_l^{(k)})\right), \\
                        \lambda_L^{(k)}&=0,
                    \end{aligned}\right.
                \end{equation*}
                with $l=L,L-1,\ldots,1$ for any $k\in I_{\mu}$
                \STATE Compute the gradients
                \begin{equation*}
                    G[\alpha]_l^{(\mu)}=\frac{1}{|I_{\mu}|}\sum_{k\in I_{\mu}}\left(y_L^{(k)}-F(\xi^{(k)})\right)\odot\mbox{\boldmath $\sigma$}(Ax_l^{(k)}),
                \end{equation*}
                \begin{equation*}
                    G[\beta]_l^{(\mu)}=\sum_{k\in I_{\mu}}\lambda_l^{(k)}(x_l^{(k)})^{\top},\quad G[\gamma]_l^{(\mu)}=\sum_{k\in I_{\mu}}\lambda_l^{(k)}
                \end{equation*}
                \STATE Set 
                \begin{equation*}
                    \alpha_l^{(\mu+1)}=\alpha_l^{(\mu)}-\tau G[\alpha]_l^{(\mu)},\quad\beta_l^{(\mu+1)}=\beta_l^{(\mu)}-\tau G[\beta]_l^{(\mu)},
                \end{equation*}
                \begin{equation*}
                    \gamma_l^{(\mu+1)}=\gamma_l^{(\mu)}-\tau G[\gamma]_l^{(\mu)}
                \end{equation*} \label{alg:enum}
            \ENDFOR
            \STATE Set $\alpha_{(\nu+1)}=(\alpha_l^{(M)})_{l=0}^L,\beta_{(\nu+1)}=(\beta_l^{(M)})_{l=0}^L$ and $\gamma_{(\nu+1)}=(\gamma_l^{(M)})_{l=0}^L$
            \STATE Shuffle the training data $\{(\xi^{(k)},F(\xi^{(k)}))\}_{k=1}^K$ randomly and set $\nu=\nu+1$
        \UNTIL{$\max(\|\alpha_{(\nu)}-\alpha_{(\nu-1)}\|,\|\beta_{(\nu)}-\beta_{(\nu-1)}\|,\|\gamma_{(\nu)}-\gamma_{(\nu-1)}\|)<\eta$}
    \end{algorithmic}
\end{algorithm}

\begin{remark*}
    {\rm
    In \ref{alg:enum} of Algorithm \ref{alg:SGD}, we call the momentum SGD \cite{rumelhart1986learning}, in which the following expression is substituted for the update expression.
    \begin{equation*}
        \alpha_l^{(\mu+1)}:=\alpha_l^{(\mu)}-\tau G[\alpha]_l^{(\mu)}+\tau_1(\alpha_l^{(\mu)}-\alpha_l^{(\mu-1)})
    \end{equation*}
    \begin{equation*}
      \beta_l^{(\mu+1)}:=\beta_l^{(\mu)}-\tau G[\beta]_l^{(\mu)}+\tau_1(\beta_l^{(\mu)}-\beta_l^{(\mu-1)})
    \end{equation*}
    \begin{equation*}
      \gamma_l^{(\mu+1)}:=\gamma_l^{(\mu)}-\tau G[\gamma]_l^{(\mu)}+\tau_1(\gamma_l^{(\mu)}-\gamma_l^{(\mu-1)})
    \end{equation*}
    where $\tau$ is the learning rate and $\tau_1$ is the momentum rate.
    }
\end{remark*}

\section{Numerical results}\label{sec4}
\setcounter{equation}{0}

\subsection{Sinusoidal Curve}\label{sec4.1}
We performed a numerical example of the regression problem of a 1-dimensional signal $F(\xi)=\sin4\pi\xi$ defined on $\xi\in[0,1]$. Let the number of training data be $K_1=1000$, and let the training data be
\begin{equation*}
    \left\{\left(\frac{k-1}{K_1},F\left(\frac{k-1}{K_1}\right)\right)\right\}_{k=1}^{K_1}\subset[0,1]\times\mathbb{R},\quad D_1:=\left\{\frac{k-1}{K_1}\right\}_{k=1}^{K_1}.
\end{equation*}
We run Algorithm \ref{alg:SGD} until $\nu=10000$. We set the learning rate to $\tau=0.01$, $A$ to the matrix with ones on the diagonal and zeros elsewhere, and
\begin{equation*}
    \alpha_{(0)}\equiv0,\quad\beta_{(0)}\equiv0,\quad\gamma_{(0)}\equiv0.
\end{equation*}

Let the number of validation data be $K_2=3333$. The signal sampled with $\Delta\xi=1/K_2$ was used as the validation data. Let $D_2$ be the set of input data used for the validation data. Fig. \ref{fig:training-data-sin}. shows the training data which is $F(\xi)=\sin4\pi\xi$ sampled from $[0,1]$ with $\Delta\xi=1/K_1$. Fig. \ref{fig:predict-sin}. shows the result predicted using validation data when $\nu=10000$. The validation data is shown as a blue line, and the result predicted using the validation data is shown as an orange line. Fig. \ref{fig:initial-parameter-sin}. shows the initial values of parameters $\alpha,\beta$ and $\gamma$. Fig. \ref{fig:parameter-sin}. shows the learning results of each design parameter at $\nu=10000$. Fig. \ref{fig:loss-sin}. shows the change in the loss function during learning for each of the training data and validation data.

Fig. \ref{fig:loss-sin}. shows that the loss function can be decreased using Algorithm \ref{alg:SGD}. Fig. \ref{fig:predict-sin}. suggests that the prediction is good. In addition, the learning results of the parameters $\alpha,\beta$ and $\gamma$ are continuous functions.

\begin{figure}[H]
    \begin{minipage}{0.49\hsize}
        \centering
        \includegraphics[bb=0 0 432 288,width=\linewidth]{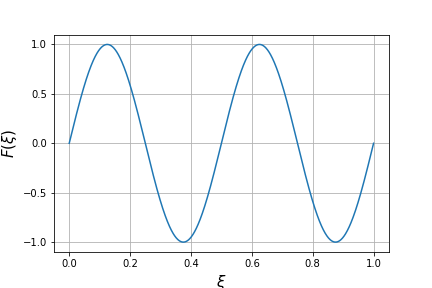}
        \caption{The training data which is $F(\xi)=\sin4\pi\xi$ sampled from $[0,1]$ with $\Delta\xi=1/K_1$.}
        \label{fig:training-data-sin}
    \end{minipage}
    \begin{minipage}{0.49\hsize}
        \centering
        \includegraphics[bb=0 0 432 288,width=\linewidth]{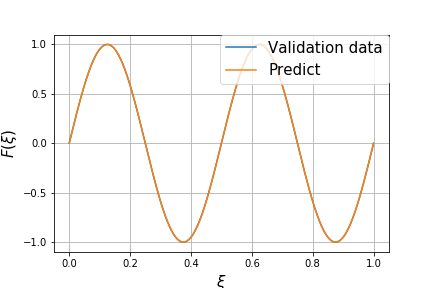}
        \caption{The result predicted using validation data when $\nu=10000$.}
        \label{fig:predict-sin}
    \end{minipage}
\end{figure}
\vspace{-0.8cm}
\begin{figure}[H]
    \begin{minipage}{0.49\hsize}
        \centering
        \includegraphics[bb=0 0 432 288,width=\linewidth]{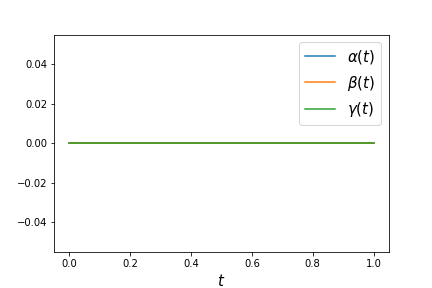}
        \caption{The initial values of design parameters $\alpha,\beta$ and $\gamma$ at each $t=hl$.}
        \label{fig:initial-parameter-sin}
    \end{minipage}
    \begin{minipage}{0.49\hsize}
        \centering
        \includegraphics[bb=0 0 432 288,width=\linewidth]{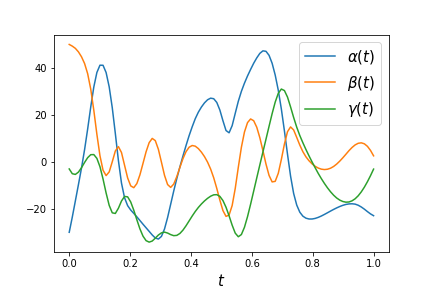}
        \caption{The learning results of design parameters $\alpha,\beta$ and $\gamma$ at each $t=hl$ when $\nu=10000$.}
        \label{fig:parameter-sin}
    \end{minipage}
\end{figure}
\vspace{-0.8cm}
\begin{figure}[H]
    \centering
    \includegraphics[bb=0 0 432 288,width=0.5\linewidth]{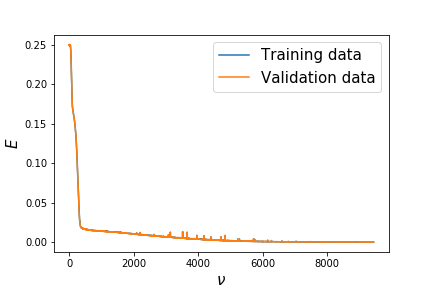}
    \caption{The change in the loss function during learning.}
    \label{fig:loss-sin}
\end{figure}

\subsection{Binary classification}\label{sec4.2}
We performed numerical experiments on a binary classification problem for 2-dimensional input. We set $n=2$ and $m=1$. Let the number of the training data be $K_1=10000$, and let $D_1=\{\xi^{(k)}|k=1,2,\ldots,K_1\}\subset [0,1]^2$ be the set of randomly generated points. Let
\begin{equation*}
    \left\{\left(\xi^{(k)},F(\xi^{(k)})\right)\right\}_{k=1}^{K_1}\subset[0,1]^2\times\mathbb{R},
\end{equation*}
\begin{equation}\label{eq:training-data-circle}
    F(\xi)=\left\{\begin{array}{ll}
        0, & \mathrm{if}~|\xi-(0.5,0.5)|<0.3, \\
        1, & \mathrm{if}~|\xi-(0.5,0.5)|\geq0.3,
    \end{array}\right.
\end{equation}
be the training data. We run Algorithm \ref{alg:SGD} until $\nu=10000$. We set the learning rate to $\tau=0.01$, $A$ to the matrix with ones on the diagonal and zeros elsewhere, and
\begin{equation*}
    \alpha_{(0)}\equiv0,\quad\beta_{(0)}\equiv0,\quad\gamma_{(0)}\equiv0.
\end{equation*}

Let the number of validation data be $K_2=2500$. The set of points $\xi$ randomly generated on $[0,1]^2$ and $F(\xi)$ is used as the validation data. Fig. \ref{fig:training-data-circle}. shows the training data in which randomly generated $\xi\in D_1$ are classified in \eqref{eq:training-data-circle}. Fig. \ref{fig:predict-circle}. shows the prediction result using validation data at $\nu=10000$. The results that were successfully predicted are shown in dark red and dark blue, and the results that were incorrectly predicted are shown in light red and light blue. Fig. \ref{fig:predict-circle-knn}. shows the result of predicting the validation data using $k$-nearest neighbor algorithm at $k=3$. Fig. \ref{fig:predict-circle-mlp}. shows the result of predicting the validation data using a multi-layer perceptron with $5000$ nodes. Fig. \ref{fig:initial-parameter-circle}. shows the initial value of parameters $\alpha,\beta$ and $\gamma$. Fig. \ref{fig:parameter-alpha-circle}., \ref{fig:parameter-beta-circle}. and \ref{fig:parameter-gamma-circle}. show the learning results of each parameters at $\nu=10000$. Fig. \ref{fig:loss-circle}. shows the change of the loss function during learning for each of the training data and validation data. Fig. \ref{fig:accuracy-circle}. shows the change of accuracy during learning. The accuracy is defined as
\begin{equation*}
    \mathrm{Accuracy}=\frac{\#\{\xi|F(\xi)=\bar{y}(\xi)\}}{K_i},\quad\mathrm{if}~\{\xi|F(\xi)=\bar{y}(\xi)\}\subset D_i,\quad(i=1,2),
\end{equation*}
\begin{equation*}
    \bar{y}(\xi):=\left\{\begin{array}{ll}
        0, & \mathrm{if}~y(T;\xi)<0.5, \\
        1, & \mathrm{if}~y(T;\xi)\geq0.5.
    \end{array}\right.
\end{equation*}
Table \ref{table:ex2-loss-accuracy} shows the value of the loss function and the accuracy of the prediction of each method.

From Fig. \ref{fig:loss-circle}. and \ref{fig:accuracy-circle}., we observe that the loss function can be decreased and accuracy can be increased using Algorithm \ref{alg:SGD}. Fig. \ref{fig:predict-circle}. shows that some points in the neighborhood of $|\xi-(0.5,0.5)|=0.3$ are wrongly predicted; however, most points are well predicted. The results are similar when compared with Fig. \ref{fig:predict-circle-knn}. and \ref{fig:predict-circle-mlp}. In addition, the learning results of the parameters $\alpha,\beta$, and $\gamma$ are continuous functions. From Table \ref{table:ex2-loss-accuracy}, the $k$-nearest neighbor algorithm minimizes the value of the loss function among the three methods. We consider that this is because the output of ODENet is $y(T;\xi)\in[0,1]$, while the output of the $k$-nearest neighbor algorithm is $\{0,1\}$. Compared to K-NN and MLP, the $(\alpha,\beta,\gamma)$-type ODENet gives a slightly worse result (Table \ref{table:ex2-loss-accuracy}). However, the binary classification is not a suitable problem to test the potential of ODENet, because the output values of ODENet are continuous. It should also be noted that the proposed model is not very tuned to increase accuracy. Considering these facts, this result also shows the potential of ODENet. 

\begin{figure}[H]
    \begin{minipage}{0.49\hsize}
        \centering
        \includegraphics[bb=0 0 276 307, width=0.7\linewidth]{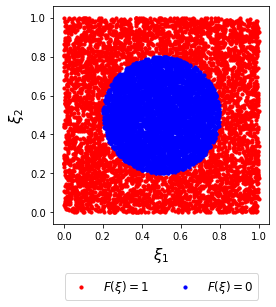}
        \caption{The training data defined by \eqref{eq:training-data-circle}.}
        \label{fig:training-data-circle}
    \end{minipage}
    \begin{minipage}{0.49\hsize}
        \centering
        \includegraphics[bb=0 0 361 330, width=\linewidth]{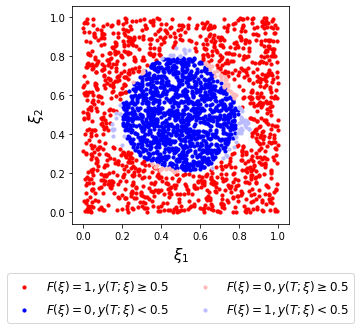}
        \caption{The result predicted using validation data when $\nu=10000$.}
        \label{fig:predict-circle}
    \end{minipage}
\end{figure}
\vspace{-0.8cm}
\begin{figure}[H]
    \begin{minipage}{0.49\hsize}
        \centering
        \includegraphics[bb=0 0 361 330, width=\linewidth]{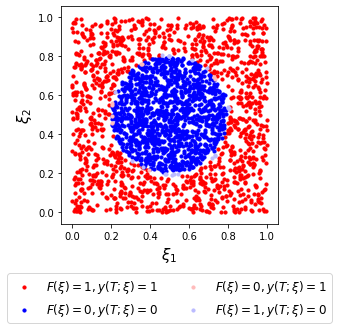}
        \caption{The result of predicting the validation data using $k$-nearest neighbor algorithm at $k=3$.}
        \label{fig:predict-circle-knn}
    \end{minipage}
    \begin{minipage}{0.49\hsize}
        \centering
        \includegraphics[bb=0 0 361 330, width=\linewidth]{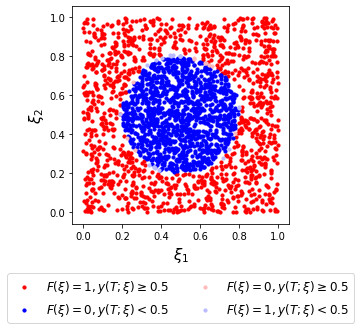}
        \caption{The result of predicting the validation data using a multi-layer perceptron with 5000 nodes.}
        \label{fig:predict-circle-mlp}
    \end{minipage}
\end{figure}
\vspace{-0.8cm}
\begin{figure}[H]
    \begin{minipage}{0.49\hsize}
        \centering
        \includegraphics[bb=0 0 432 288, width=\linewidth]{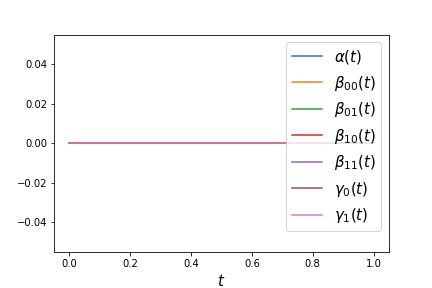}
        \caption{The initial values of design parameters $\alpha,\beta$ and $\gamma$ at each $t=hl$.}
        \label{fig:initial-parameter-circle}
    \end{minipage}
    \begin{minipage}{0.49\hsize}
        \centering
        \includegraphics[bb=0 0 432 288, width=\linewidth]{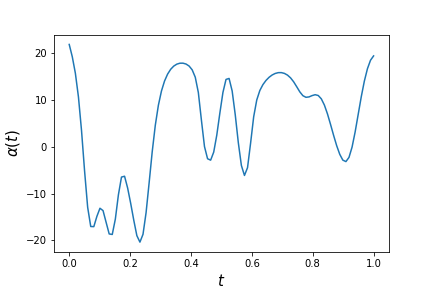}
        \caption{The learning result of design parameters $\alpha$ at each $t=hl$ when $\nu=10000$.}
        \label{fig:parameter-alpha-circle}
    \end{minipage}
\end{figure}
\begin{figure}[H]
    \begin{minipage}{0.49\hsize}
        \centering
        \includegraphics[bb=0 0 432 288, width=\linewidth]{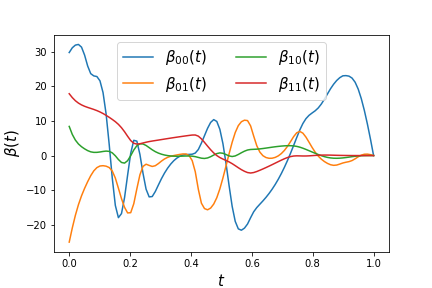}
        \caption{The learning result of design parameters $\beta$ at each $t=hl$ when $\nu=10000$.}
        \label{fig:parameter-beta-circle}
    \end{minipage}
    \begin{minipage}{0.49\hsize}
        \centering
        \includegraphics[bb=0 0 432 288, width=\linewidth]{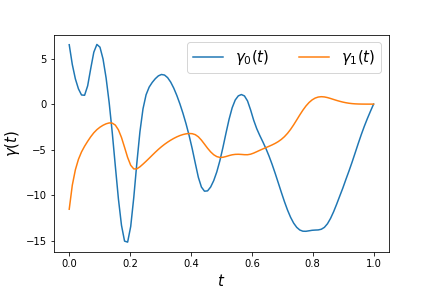}
        \caption{The learning result of design parameters $\gamma$ at each $t=hl$ when $\nu=10000$.}
        \label{fig:parameter-gamma-circle}
    \end{minipage}
\end{figure}
\begin{figure}[H]
    \begin{minipage}{0.49\hsize}
        \centering
        \includegraphics[bb=0 0 432 288, width=\linewidth]{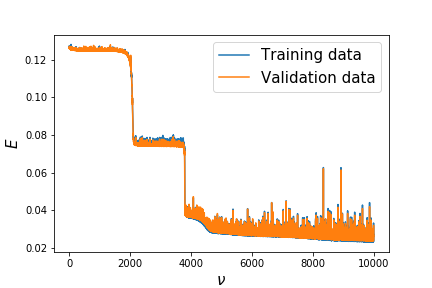}
        \caption{The change of the loss function during learning.}
        \label{fig:loss-circle}
    \end{minipage}
    \begin{minipage}{0.49\hsize}
        \centering
        \includegraphics[bb=0 0 432 288, width=\linewidth]{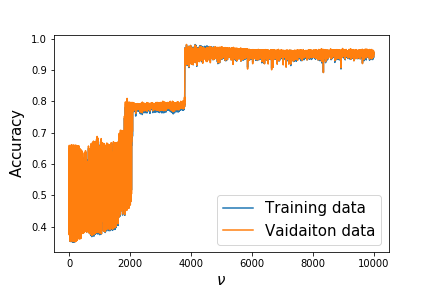}
        \caption{The change of accuracy during learning.}
        \label{fig:accuracy-circle}
    \end{minipage}
\end{figure}

\begin{table}[H]
    \begin{center}
        \caption{The prediction result of each method.}
        \label{table:ex2-loss-accuracy}
        \begin{tabular}{lll} \hline
            Method & Loss & Accuracy \\ \hline
            This paper ($(\alpha,\beta,\gamma)$-type ODENet) & 0.02629 & 0.9592 \\
            $K$-nearest neighbor algorithm (K-NN) & 0.006000 & 0.9879 \\
            Multilayer perceptron (MLP) & 0.006273 & 0.9883 \\ \hline
        \end{tabular}
    \end{center}
\end{table}

\subsection{Multinomial classification in MNIST}\label{sec4.3}
We performed a numerical experiment on a classification problem using MNIST, a dataset of handwritten digits. The input is a $28\times 28$ image and the output is a one-hot vector of labels attached to the MNIST dataset. We set $n=784$ and $m=10$. Let the number of training data be $K_1=43200$ and let the batchsize be $|I_{\mu}|=128$. We run Algorithm \ref{alg:SGD} until $\nu=1000$. However, the momentum SGD was used to update the design parameters. We set the learning rate as $\tau=0.01$, the momentum rate as $0.9$, $A$ to the matrix with ones on the diagonal and zeros elsewhere, and
\begin{equation*}
    \alpha_{(0)}\equiv10^{-8},\quad\beta_{(0)}\equiv10^{-8},\quad\gamma_{(0)}\equiv10^{-8}.
\end{equation*}

Let the number of validation data be $K_2=10800$. Fig. \ref{fig:loss-mnist}. shows the change of the loss function during learning for each of the training data and validation data. Fig. \ref{fig:accuracy-mnist}. shows the change of accuracy during learning. Using the test data, the values of the loss function and accuracy are
\begin{equation*}
    E=0.06432,\quad\mathrm{Accuracy}=0.9521,
\end{equation*}
at $\nu=1000$, respectively.

Fig. \ref{fig:loss-mnist}. and \ref{fig:accuracy-mnist}. suggest that the loss function can be decreased and accuracy can be increased using Algorithm \ref{alg:SGD} (using the Momentum SGD).

\begin{figure}[H]
    \begin{minipage}{0.49\hsize}
        \centering
        \includegraphics[bb=0 0 432 288, width=\linewidth]{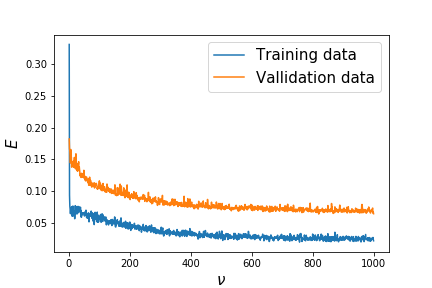}
        \caption{The change of the loss function during learning.}
        \label{fig:loss-mnist}
    \end{minipage}
    \begin{minipage}{0.49\hsize}
        \centering
        \includegraphics[bb=0 0 432 288, width=\linewidth]{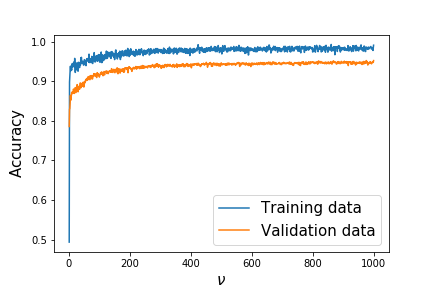}
        \caption{The change of accuracy during learning.}
        \label{fig:accuracy-mnist}
    \end{minipage}
\end{figure}

\section{Conclusion}\label{sec5}
In this paper, we proposed the $(\alpha,\beta,\gamma)$-type ODENet and the $(\alpha,\beta,\gamma)$-type ResNet and showed that they uniformly approximate an arbitrary continuous function on a compact set. This result shows that the $(\alpha,\beta,\gamma)$-type ODENet and the $(\alpha,\beta,\gamma)$-type ResNet can represent a variety of data. In addition, we showed the existence and continuity of the gradient of the loss function in a general ODENet. We performed numerical experiments on some data and confirmed that the gradient method reduces the loss function and represents the data.

Our future work is to show that the design parameters converge to a global minimizer of the loss function using a continuous gradient. We also plan to show that ODENet with other forms, such as convolution, can represent arbitrary data.

\section{Acknowledgement}\label{sec6}
This work is partially supported by JSPSKAKENHI JP20KK0058, and JST CREST Grant Number JPMJCR2014,
Japan.

\bibliographystyle{plain}
\bibliography{references}

\appendix
\renewcommand{\thesection}{{{\Alph{section}}}}
\renewcommand{\theequation}{\Alph{section}.\arabic{equation}}

\section{Differentiability with respect to parameters of ODE}\label{appendix1}
\setcounter{equation}{0}
We discuss the differentiability with respect to the design parameters of ordinary differential equations.
\begin{theorem}\label{thm:differentiability-ode}
    {\rm
    Let $N$ and $r$ be natural numbers, and $T$ be a positive real number. We define $X:=C^1([0,T];\mathbb{R}^N)$ and $\Omega:=C([0,T];\mathbb{R}^r)$. We consider the initial value problem for the ordinary differential equation:
    \begin{equation}\label{eq:ode-ivp}
        \left\{\begin{aligned}
            x'(t)&=f(t,x(t),\omega(t)), & t\in(0,T], \\
            x(0)&=\xi, &
        \end{aligned}\right.
    \end{equation}
    where $x$ is a function from $[0,T]$ to $\mathbb{R}^N$, and $\xi\in D$ is the initial value; $\omega\in\Omega$ is the design parameter; $f$ is a continuously differentiable function from $[0,T]\times\mathbb{R}^N\times\mathbb{R}^r$ to $\mathbb{R}^N$; There exists $L>0$ such that
    \begin{equation*}
        |f(t,x_1,\omega(t))-f(t,x_2,\omega(t))|\leq L|x_1-x_2|
    \end{equation*}
    for any $t\in[0,T],x_1,x_2\in\mathbb{R}^N$, and $\omega\in\Omega$. Then, the solution to \eqref{eq:ode-ivp} satisfies $[\Omega\ni\omega\mapsto x[\omega]]\in C^1(\Omega;X)$. Furthermore, if we define
    \begin{equation*}
        y(t):=(\partial_{\omega}x[\omega]\eta)(t)
    \end{equation*}
    for any $\eta\in\Omega$, the following relations
    \begin{equation*}
        \left\{\begin{array}{ll}
            y'(t)-\nabla_x^{\top}f(t,x[\omega](t),\omega(t))y(t)=\nabla_{\omega}^{\top}f(t,x[\omega](t),\omega(t))\eta(t), & t\in(0,T], \\
            y(0)=0, &
        \end{array}\right.
    \end{equation*}
    are satisfied.
    }
\end{theorem}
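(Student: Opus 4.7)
The plan is to break the statement into four tasks: existence and uniqueness of $x[\omega]$, continuity of the solution map, identification of a candidate for $\partial_\omega x[\omega]$ via the linearized (variational) equation, and verification that this candidate is indeed the Fr\'echet derivative and depends continuously on $\omega$. The last two steps both rely on Gronwall's inequality applied to an auxiliary linear ODE, which is the natural engine for this kind of result.

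First, the global Lipschitz hypothesis on $f$ in $x$, together with continuity of $f$, gives a unique solution $x[\omega]\in C^1([0,T];\mathbb{R}^N)$ for every $\omega\in\Omega$ by a standard Picard--Lindel\"of argument. For continuity of $\omega\mapsto x[\omega]$, I set $v:=x[\tilde\omega]-x[\omega]$ and split
\begin{equation*}
v'(t)=\bigl[f(t,x[\tilde\omega],\tilde\omega)-f(t,x[\omega],\tilde\omega)\bigr]+\bigl[f(t,x[\omega],\tilde\omega)-f(t,x[\omega],\omega)\bigr].
\end{equation*}
The first bracket is bounded by $L|v(t)|$, and the second tends to $0$ uniformly on $[0,T]$ as $\|\tilde\omega-\omega\|_\infty\to 0$, since $f$ is continuous and $\{(t,x[\omega](t)):t\in[0,T]\}$ is compact. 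Gronwall yields $\|x[\tilde\omega]-x[\omega]\|_{C([0,T])}\to 0$, and then the ODE gives convergence of derivatives, so the map is continuous into $X$.

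Next, I define the candidate derivative as the unique solution $y\in X$ of the linear variational equation
\begin{equation*}
y'(t)=\nabla_x^\top f(t,x[\omega](t),\omega(t))\,y(t)+\nabla_\omega^\top f(t,x[\omega](t),\omega(t))\,\eta(t),\qquad y(0)=0,
\end{equation*}
and denote $L_\omega\eta:=y$; linearity and boundedness of $L_\omega:\Omega\to X$ are immediate from Gronwall. To show $\partial_\omega x[\omega]=L_\omega$, I set $w:=x[\omega+\eta]-x[\omega]-L_\omega\eta$ and use a first-order Taylor expansion of $f$ around $(t,x[\omega](t),\omega(t))$ to obtain
\begin{equation*}
w'(t)=\nabla_x^\top f(t,x[\omega](t),\omega(t))\,w(t)+R(t),\qquad w(0)=0,
\end{equation*}
where $R(t)=o\bigl(|x[\omega+\eta](t)-x[\omega](t)|+|\eta(t)|\bigr)$ uniformly in $t$ on compacta. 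Combining the a priori bound $\|x[\omega+\eta]-x[\omega]\|_\infty=O(\|\eta\|_\infty)$ from the continuity step with uniform continuity of $\nabla_x f,\nabla_\omega f$ on a fixed compact neighborhood gives $\|R\|_{L^1(0,T)}=o(\|\eta\|_\infty)$. A final application of Gronwall yields $\|w\|_X=o(\|\eta\|_\infty)$, proving Fr\'echet differentiability. For the $C^1$ property, I let $\omega_n\to\omega$ in $\Omega$; by the continuity step $x[\omega_n]\to x[\omega]$ in $X$, hence the coefficient matrices of the variational equation converge uniformly in $t$, and another Gronwall estimate applied to the difference of the two variational systems shows $\|L_{\omega_n}-L_\omega\|_{\mathcal{L}(\Omega,X)}\to 0$.

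I expect the main obstacle to be the Taylor-remainder step: showing $\|R\|_{L^1}=o(\|\eta\|_\infty)$ uniformly in $t$ requires combining (i) the $O(\|\eta\|_\infty)$ estimate on $x[\omega+\eta]-x[\omega]$, (ii) uniform continuity of $\nabla_x f$ and $\nabla_\omega f$ on a compact set containing the graphs of all perturbed solutions for small $\eta$, and (iii) care with the integral form of the remainder $\int_0^1[\nabla f(\cdot+s(\cdot))-\nabla f(\cdot)]\,ds$. Once this is in place, every other step is a direct Gronwall computation.
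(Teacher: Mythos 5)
Your proof is essentially correct, but it follows a genuinely different route from the paper. The paper encodes the ODE as the zero set of the map $J(x,\omega)(t)=x(t)-\xi-\int_0^t f(s,x(s),\omega(s))\,ds$ on $X\times\Omega$, verifies that $J$ is $C^1$ with $\partial_xJ(x,\omega)$ the linearized integral operator, checks that this operator is invertible in $B(X,X)$ by solving the associated linear ODE, and then invokes the implicit function theorem in Banach spaces: the $C^1$ dependence $\omega\mapsto x[\omega]$ and the variational equation for $y=\partial_\omega x[\omega]\eta$ both fall out of implicit differentiation of $J(x[\omega],\omega)=0$. You instead run the classical direct argument: Picard--Lindel\"of, a Gronwall estimate for continuity, explicit construction of the candidate derivative $L_\omega\eta$ as the solution of the variational equation, a Taylor-remainder estimate to verify Fr\'echet differentiability, and one more Gronwall estimate for continuity of $\omega\mapsto L_\omega$ in operator norm. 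The trade-off is clear: the paper's route concentrates the work into verifying the hypotheses of one abstract theorem and gets the variational equation for free (and would extend painlessly to higher-order smoothness), while yours is self-contained, avoids the Banach-space implicit function theorem entirely, and produces explicit $e^{LT}$-type constants at every stage---at the cost of having to build and verify the derivative by hand, which you rightly flag as the delicate step. One small imprecision to fix when writing it out: the bound $\|x[\omega+\eta]-x[\omega]\|_\infty=O(\|\eta\|_\infty)$ does not follow from the continuity step as you stated it (which only gives $o(1)$); you need to upgrade the second bracket in your splitting to a Lipschitz-in-$\omega$ bound of the form $C\|\eta\|_\infty$, with $C$ a bound on $\nabla_\omega f$ over a compact set containing the graphs of all perturbed solutions, before applying Gronwall. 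Since $f$ is $C^1$ this is immediate, but it must be said, as the $O(\|\eta\|_\infty)$ rate is exactly what makes the Taylor remainder $o(\|\eta\|_\infty)$.
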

\begin{proof}
    Let $X_0$ be the set of continuous functions from $[0,T]$ to $\mathbb{R}^N$. Because $f(t,\cdot,\omega(t))$ is Lipschitz continuous for any $t\in[0,T]$ and $\omega\in\Omega$, there exists a unique solution $x[\omega]\in X$ in \eqref{eq:ode-ivp}. We define the map $J:X\times\Omega\to X$ as
    \begin{equation*}
        J(x,\omega)(t):=x(t)-\xi-\int_0^tf(s,x(s),\omega(s))ds.
    \end{equation*}
    The map $J$ satisfies
    \begin{equation*}
        J(x,\omega)'(t)=x'(t)-f(t,x(t),\omega(t)).
    \end{equation*}
    Since $f\in C^1([0,T]\times\mathbb{R}^N\times\mathbb{R}^r;\mathbb{R}^N)$, $J\in C(X\times\Omega;X)$.

    Take an arbitrary $\omega\in\Omega$. For any $x\in X$, let
    \begin{equation*}
        f\circ x(t):=f(t,x(t),\omega(t)),\quad\nabla_x^{\top}f\circ x(t):=\nabla_x^{\top}f(t,x(t),\omega(t)).
    \end{equation*}
    We define the map $A(x):X\to X$ as
    \begin{equation*}
        (A(x)y)(t):=y(t)-\int_0^t(\nabla_x^{\top}f\circ x(s))y(s)ds.
    \end{equation*}
    The map $A(x)$ satisfies
    \begin{equation*}
        (A(x)y)'(t)=y'(t)-(\nabla_x^{\top}f\circ x(t))y(t).
    \end{equation*}
    $x$ and $\omega$ are bounded because they are continuous functions on a compact interval. Because $f\in C^1([0,T]\times\mathbb{R}^N\times\mathbb{R}^r;\mathbb{R}^N)$, there exists $C>0$ such that $|\nabla_x^{\top}f\circ x(t)|\leq C$ for any $t\in[0,T]$. From
    \begin{equation*}
        |(A(x)y)(t)|\leq\|y\|_{X_0}+CT\|y\|_{X_0},\quad|(A(x)y)'(t)|\leq\|y'\|_{X_0}+C\|y\|_{X_0},
    \end{equation*}
    \begin{equation*}
        \|A(x)y\|_X\leq\|y\|_X+C(T+1)\|y\|_X=(1+C(T+1))\|y\|_X
    \end{equation*}
    is satisfied. Hence, $A(x)\in B(X,X)$. Let us fix $x_0\in X$. We take $x\in X$ such that $x\to x_0$.
    \begin{align*}
        |(A(x)y)(t)-(A(x_0)y)(t)| &\leq \int_0^t|\nabla_x^{\top}f\circ x(s)-\nabla_x^{\top}f\circ x_0(s)||y(s)|ds \\
        &\leq T\|y\|_X\|\nabla_x^{\top}f\circ x-\nabla_x^{\top}f\circ x_0\|_{C([0,T];\mathbb{R}^{N\times N})}
    \end{align*}
    \begin{align*}
        |(A(x)y)'(t)-(A(x_0)y)'(t)| &\leq |\nabla_x^{\top}f\circ x(t)-\nabla_x^{\top}f\circ x_0(t)||y(t)| \\
        &\leq \|y\|_X\|\nabla_x^{\top}f\circ x-\nabla_x^{\top}f\circ x_0\|_{C([0,T];\mathbb{R}^{N\times N})}
    \end{align*}
    \begin{equation*}
        \|A(x)y-A(x_0)y\|_X\leq(T+1)\|y\|_X\|\nabla_x^{\top}f\circ x-\nabla_x^{\top}f\circ x_0\|_{C([0,T];\mathbb{R}^{N\times N})}
    \end{equation*}
    \begin{equation*}
        \|A(x)-A(x_0)\|_{B(X,X)}\leq(T+1)\|\nabla_x^{\top}f\circ x-\nabla_x^{\top}f\circ x_0\|_{C([0,T];\mathbb{R}^{N\times N})}
    \end{equation*}
    Hence, $A\in C(X;B(X,X))$.
    \begin{align*}
        &J(x+y,\omega)(t)-J(x,\omega)(t)-(A(x)y)(t) \\
        &=-\int_0^t(f\circ(x+y)(s)-f\circ x(s)-(\nabla_x^{\top}f\circ x(s))y(s))ds
    \end{align*}
    \begin{equation*}
        \|J(x+y,\omega)-J(x,\omega)-A(x)y\|_X\leq(T+1)\|f\circ(x+y)-f\circ x-(\nabla_x^{\top}f\circ x)y\|_{X_0}
    \end{equation*}
    Form the Taylor expansion of $f$, we obtain
    \begin{equation*}
        f(t,x(t)+y(t),\omega(t))=f(t,x(t),\omega(t))+\int_0^1\nabla_x^{\top}f(t,x(t)+\zeta y(t),\omega(t))y(t)d\zeta
    \end{equation*}
    for any $t\in[0,T],x,y\in X$ and $\omega\in\Omega$. We obtain
    \begin{equation*}
        |f\circ(x+y)(t)-f\circ x(t)-(\nabla_x^{\top}f\circ x(t))y(t)|\leq\int_0^1|\nabla_x^{\top}f\circ(x+\zeta y)(t)-\nabla_x^{\top}f\circ x(t)||y(t)|d\zeta.
    \end{equation*}
    For any $\epsilon>0$, there exists $\delta>0$ such that
    \begin{equation*}
        \|y\|_{X_0}<\delta,\zeta\in[0,1]~\Rightarrow~|\nabla_x^{\top}f\circ(x+\zeta y)(t)-\nabla_x^{\top}f\circ x(t)|<\epsilon.
    \end{equation*}
    We obtain
    \begin{equation*}
        |f\circ(x+y)(t)-f\circ x(t)-(\nabla_x^{\top}f\circ x(t))y(t)|\leq\epsilon\|y\|_{X_0},
    \end{equation*}
    \begin{equation*}
        \|J(x+y,\omega)-J(x,\omega)-A(x)y\|_X\leq\epsilon(T+1)\|y\|_X.
    \end{equation*}
    Hence,
    \begin{equation*}
        \partial_xJ(x,\omega)y=A(x)y.
    \end{equation*}
    From $\partial_xJ(\cdot,\omega)\in C(X;B(X,X))$, $J(\cdot,\omega)\in C^1(X;X)$.

    By fixing $\omega_0\in\Omega$, there exists a solution $x_0\in X$ of \eqref{eq:ode-ivp} such that
    \begin{equation*}
        x_0(t)=\xi+\int_0^tf(s,x_0(s),\omega_0(s))ds.
    \end{equation*}
    That is,
    \begin{equation*}
        J(x_0,\omega_0)(t)=x_0(t)-\xi-\int_0^tf(s,x_0(s),\omega_0(s))ds=x_0(t)-x_0(t)=0
    \end{equation*}
    is satisfied. If $y\in X$ satisfies $(\partial_xJ(x_0,\omega_0)y)(t)=g(t)$ for any $g\in X$, then
    \begin{equation*}
        \left\{\begin{array}{ll}
            y'(t)-\nabla_x^{\top}f(t,x_0(t),\omega_0(t))y(t)=g'(t), & t\in(0,T], \\
            y(0)=g(0). &
        \end{array}\right.
    \end{equation*}
    Because the solution to this ordinary differential equation exists uniquely, there exists an inverse map $(\partial_xJ(x_0,\omega_0))^{-1}$ such that $(\partial_xJ(x_0,\omega_0))^{-1}\in B(X,X)$.

    From the implicit function theorem, for any $\omega\in\Omega$, there exists $x[\omega]\in X$ such that $J(x[\omega],\omega)=0$. From $J\in C^1(X\times\Omega;X)$, we obtain $[\omega\mapsto x[\omega]]\in C^1(\Omega;X)$. We put
    \begin{equation*}
        y(t):=(\partial_{\omega}x[\omega]\eta)(t)
    \end{equation*}
    for any $\eta\in\Omega$. From $J(x[\omega],\omega)=0$,
    \begin{equation*}
        (\partial_xJ(x[\omega],\omega)y)(t)+(\partial_{\omega}J(x[\omega],\omega)\eta)(t)=0,
    \end{equation*}
    \begin{equation*}
        y(t)-\int_0^t\nabla_x^{\top}f(s,x[\omega](s),\omega(s))y(s)ds-\int_0^t\nabla_{\omega}^{\top}f(s,x[\omega](s),\omega(s))\eta(s)ds=0.
    \end{equation*}
    Therefore, we obtain
    \begin{equation*}
        \left\{\begin{array}{ll}
            y'(t)-\nabla_x^{\top}f(t,x[\omega](t),\omega(t))y(t)=\nabla_{\omega}^{\top}f(t,x[\omega](t),\omega(t))\eta(t), & t\in(0,T], \\
            y(0)=0. &
        \end{array}\right.
    \end{equation*}
\end{proof}

\section{General ODENet}\label{appendix2}
\setcounter{equation}{0}
In this section, we describe the general ODENet and the existence and continuity of the gradient of a loss function with respect to the design parameter. Let $N$ and $r$ be natural numbers and $T$ be a positive real number. Let the input data $D\subset\mathbb{R}^n$ be a compact set. We define $X:=C^1([0,T];\mathbb{R}^N)$ and $\Omega:=C([0,T];\mathbb{R}^r)$. We consider the ODENet with the following system of ordinary differential equations.
\begin{equation}\label{eq:odenet-general}
    \left\{\begin{aligned}
        x'(t)&=f(t,x(t),\omega(t)), & t\in(0,T], \\
        x(0)&=Q\xi, &
    \end{aligned}\right.
\end{equation}
where $x$ is a function from $[0,T]$ to $\mathbb{R}^N$; $\xi\in D$ is the input data; $Px(T)$ is the final output; $\omega\in\Omega$ is the design parameter; $P$ and $Q$ are $m\times N$ and $N\times n$ real matrices; $f$ is a continuously differentiable function from $[0,T]\times\mathbb{R}^N\times\mathbb{R}^r$ to $\mathbb{R}^N$, and $f(t,\cdot,\omega(t))$ is Lipschitz continuous for any $t\in[0,T]$ and $\omega\in\Omega$. For an input data $\xi\in D$, we denote the output data as $Px(T;\xi)$. We consider an approximation of $F\in C(D;\mathbb{R}^m)$ using ODENet with a system of ordinary differential equations \eqref{eq:odenet-general}. We define the loss function as
\begin{equation*}
    e[x]=\frac{1}{2}\left|Px(T;\xi)-F(\xi)\right|^2.
\end{equation*}
We define the gradient of the loss function with respect to the design parameter as follows:

\begin{definition}\label{def:gradient}
    {\rm
    Let $\Omega$ be a real Banach space. Assume that the inner product $\left<\cdot,\cdot\right>$ is defined on $\Omega$. The functional $\Phi:\Omega\to\mathbb{R}$ is a Fréchet differentiable at $\omega\in\Omega$. The Fréchet derivative is denoted by $\partial\Phi[\omega]\in\Omega^{*}$. If $G[\omega]\in\Omega$ exists such that
    \begin{equation*}
        \partial\Phi[\omega]\eta=\left<G[\omega],\eta\right>,
    \end{equation*}
    for any $\eta\in\Omega$, we call $G[\omega]$ the gradient of $\Phi$ at $\omega\in\Omega$ with respect to the inner product $\left<\cdot,\cdot\right>$.
    }
\end{definition}

\begin{remark*}
    {\rm
    If there exists a gradient $G[\omega]$ of the functional $\Phi$ at $\omega\in\Omega$ with respect to the inner product $\left<\cdot,\cdot\right>$, the algorithm to find the minimum value of $\Phi$ by
    \begin{equation*}
        \omega_{(\nu)}=\omega_{(\nu-1)}-\tau G[\omega_{(\nu-1)}]
    \end{equation*}
    is called the steepest descent method.
    }
\end{remark*}

\begin{theorem}\label{thm:general-adjoint-gradient}
    {\rm
    Given the design parameter $\omega\in\Omega$, let $x[\omega](t;\xi)$ be the solution to \eqref{eq:odenet-general} with the initial value $\xi\in D$. Let $\lambda:[0,T]\to\mathbb{R}^N$ be the adjoint and satisfy the following adjoint equation:
    \begin{equation*}
        \left\{\begin{aligned}
            \lambda'(t)&=-\nabla_xf^{\top}\left(t,x[\omega](t;\xi),\omega(t)\right)\lambda(t), & t\in[0,T), \\
            \lambda(T)&=P^{\top}\left(Px[\omega](T;\xi)-F(\xi)\right). &
        \end{aligned}\right.
    \end{equation*}
    We define the functional $\Phi:\Omega\to\mathbb{R}$ as $\Phi[\omega]=e[x[\omega]]$. Then, there exists a gradient $G[\omega]\in\Omega$ of $\Phi$ as $\omega\in\Omega$ with respect to the $L^2(0,T;\mathbb{R}^r)$ inner predict such that
    \begin{equation*}
        \partial\Phi[\omega]\eta=\int_0^TG[\omega](t)\cdot\eta(t)dt,\quad G[\omega](t)=\nabla_{\omega}f^{\top}\left(t,x[\omega](t;\xi),\omega(t)\right)\lambda(t),
    \end{equation*}
    for any $\eta\in\Omega$.
    }
\end{theorem}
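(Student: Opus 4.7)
The plan is to apply the standard adjoint-state method, leveraging Theorem \ref{thm:differentiability-ode} from Appendix A to obtain the Fréchet derivative of $x[\omega]$, and then rewriting the linearized output via integration by parts against the adjoint $\lambda$ so that the dependence on $\eta$ is isolated in the $L^2$ inner product.

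First, I would invoke Theorem \ref{thm:differentiability-ode} to conclude that $[\omega\mapsto x[\omega](\cdot;\xi)]\in C^1(\Omega;X)$, and that for every $\eta\in\Omega$ the directional derivative $y(t):=(\partial_\omega x[\omega]\eta)(t)$ is the unique solution of the linearized system
\begin{equation*}
    \left\{\begin{array}{ll}
        y'(t)-\nabla_x^{\top}f(t,x[\omega](t;\xi),\omega(t))\,y(t)=\nabla_{\omega}^{\top}f(t,x[\omega](t;\xi),\omega(t))\,\eta(t), & t\in(0,T], \\
        y(0)=0. &
    \end{array}\right.
\end{equation*}
Because $\Phi[\omega]=\frac{1}{2}|Px[\omega](T;\xi)-F(\xi)|^2$ is the composition of $\omega\mapsto x[\omega]$ with a $C^1$ map into $\mathbb{R}$, the chain rule gives
\begin{equation*}
    \partial\Phi[\omega]\eta=\bigl(Px[\omega](T;\xi)-F(\xi)\bigr)\cdot Py(T)=\lambda(T)\cdot y(T),
\end{equation*}
where I have used the prescribed terminal condition $\lambda(T)=P^{\top}(Px[\omega](T;\xi)-F(\xi))$.

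Next, I would carry out the adjoint computation. Since $\lambda$ and $y$ are both $C^1$, I compute
\begin{equation*}
    \frac{d}{dt}\bigl(\lambda(t)\cdot y(t)\bigr)=\lambda'(t)\cdot y(t)+\lambda(t)\cdot y'(t).
\end{equation*}
Substituting the adjoint equation $\lambda'=-\nabla_x f^{\top}(\cdots)\lambda$ and the linearized equation for $y'$, the two terms containing $\nabla_x^{\top}f$ cancel via the identity $\lambda^{\top}\nabla_x^{\top}f\,y=(\nabla_x f^{\top}\lambda)^{\top}y$, leaving
\begin{equation*}
    \frac{d}{dt}\bigl(\lambda(t)\cdot y(t)\bigr)=\lambda(t)\cdot\bigl(\nabla_{\omega}^{\top}f(t,x[\omega](t;\xi),\omega(t))\eta(t)\bigr)=\bigl(\nabla_{\omega}f^{\top}(t,x[\omega](t;\xi),\omega(t))\lambda(t)\bigr)\cdot\eta(t).
\end{equation*}
Integrating from $0$ to $T$ and using $y(0)=0$ yields
\begin{equation*}
    \lambda(T)\cdot y(T)=\int_0^T\bigl(\nabla_{\omega}f^{\top}(t,x[\omega](t;\xi),\omega(t))\lambda(t)\bigr)\cdot\eta(t)\,dt.
\end{equation*}
Combining with the identification $\partial\Phi[\omega]\eta=\lambda(T)\cdot y(T)$ established above gives the desired Riesz representation with $G[\omega](t)=\nabla_{\omega}f^{\top}(t,x[\omega](t;\xi),\omega(t))\lambda(t)$.

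Finally I would verify the regularity $G[\omega]\in\Omega=C([0,T];\mathbb{R}^r)$: the adjoint equation is a linear ODE with continuous coefficients (since $f\in C^1$, $x[\omega]\in X$, and $\omega\in\Omega$), so $\lambda\in C^1([0,T];\mathbb{R}^N)$; hence $G[\omega]$ is continuous by composition. The main obstacle is conceptually minor but notationally delicate, namely bookkeeping the transposes so that the cancellation of the $\nabla_x^{\top}f$ terms is clean; the Fréchet differentiability itself is already supplied by Theorem \ref{thm:differentiability-ode}, and uniqueness/existence of the adjoint follows from the same linear ODE theory applied with time reversed, so no further analytical input is required.
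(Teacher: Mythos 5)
Your proposal is correct and follows essentially the same route as the paper: invoke Theorem \ref{thm:differentiability-ode} for the linearized sensitivity equation, identify $\partial\Phi[\omega]\eta$ with $\lambda(T)\cdot y(T)$, and use the adjoint equation to move the derivative off $y$ so that only the $\nabla_{\omega}f^{\top}\lambda$ term pairs with $\eta$. The only cosmetic difference is that you differentiate $\lambda(t)\cdot y(t)$ and integrate, while the paper starts from the inner product $\left<g,\eta\right>$ and integrates by parts; these are the same computation.
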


\begin{proof}
    $e$ is a continuously differentiable function from $X$ to $\mathbb{R}$, and the solution of \eqref{eq:odenet-general} satisfies $[\omega\mapsto x[\omega]]$ from the Theorem \ref{thm:differentiability-ode}. Hence, $\Phi\in C^1(\Omega)$. For any $\eta\in\Omega$,
    \begin{align*}
        \partial\Phi[\omega]\eta &= (Px[\omega](T;\xi)-F(\xi))\cdot P(\partial_{\omega}x[\omega]\eta)(T), \\&
        = P^{\top}(Px[\omega](T;\xi)-F(\xi))\cdot(\partial_{\omega}x[\omega]\eta)(T).
    \end{align*}
    We put $y(t):=(\partial_{\omega}x[\omega]\eta)(t)$. From Theorem \ref{thm:differentiability-ode}, we obtain
    \begin{equation*}
        \left\{\begin{array}{ll}
            y'(t)-\nabla_x^{\top}f\left(t,x[\omega](t,\xi),\omega(t)\right)y(t)=\nabla_{\omega}^{\top}f\left(t,x[\omega](t;\xi),\omega(t)\right)\eta(t), & t\in(0,T], \\
            y(0)=0.
        \end{array}\right.
    \end{equation*}
    Since the assumption,
    \begin{equation*}
        \left\{\begin{aligned}
            \lambda'(t)&=-\nabla_xf^{\top}\left(t,x[\omega](t;\xi),\omega(t)\right)\lambda(t), & t\in[0,T), \\
            \lambda(T)&=P^{\top}\left(Px[\omega](T;\xi)-F(\xi)\right). &
        \end{aligned}\right.
    \end{equation*}
    is satisfied. We define
    \begin{equation*}
        g(t):=\nabla_{\omega}f^{\top}\left(t,x[\omega](t;\xi),\omega(t)\right)\lambda(t).
    \end{equation*}
    Then, $g\in\Omega$ is satisfied. We calculate the $L^2(0,T;\mathbb{R}^r)$ inner product of $g$ and $\eta$,
    \begin{align*}
        \left<g,\eta\right> &= \int_0^T(\nabla_{\omega}f^{\top}(t,x[\omega](t;\xi),\omega(t))\lambda(t))\cdot\eta(t)dt, \\
        &= \int_0^T\lambda(t)\cdot(\nabla_{\omega}^{\top}f(t,x[\omega](t;\xi),\omega(t))\eta(t))dt, \\
        &= \int_0^T\lambda(t)\cdot(y'(t)-\nabla_x^{\top}f(t,x[\omega](t;\xi),\omega(t))y(t))dt, \\
        &= \lambda(T)\cdot y(T)-\lambda(0)\cdot y(0)-\int_0^T(\lambda'(t)+\nabla_xf^{\top}(t,x[\omega](t;\xi),\omega(t))\lambda(t))\cdot y(t)dt, \\
        &= P^{\top}(Px[\omega](t;\xi)-F(\xi))\cdot y(T), \\
        &= \partial\Phi[\omega]\eta.
    \end{align*}
    Therefore, there exists a gradient $G[\omega]\in\Omega$ of $\Phi$ at $\omega\in\Omega$ with respect to the $L^2(0,T;\mathbb{R}^r)$ inner product such that
    \begin{equation*}
        G[\omega](t)=\nabla_{\omega}f^{\top}\left(t,x[\omega](t;\xi),\omega(t)\right)\lambda(t).
    \end{equation*}
\end{proof}

\section{General ResNet}\label{appendix3}
\setcounter{equation}{0}
In this section, we describe the general ResNet and error backpropagation. We consider a ResNet with the following system of difference equations
\begin{equation}\label{eq:resnet-general}
    \left\{\begin{aligned}
        x^{(l+1)}&=x^{(l)}+f^{(l)}(x^{(l)},\omega^{(l)}), & l=0,1,\ldots,L-1, \\
        x^{(0)}&=Q\xi, &
    \end{aligned}\right.
\end{equation}
where $x^{(l)}$ is an $N$-dimensional real vector for all $l=0,1,\ldots,L$; $\xi\in D$ is the input data; $Px^{(L)}$ is the final output; $\omega^{(l)}\in\mathbb{R}^{r_l}~(l=0,1,\ldots,L-1)$ are the design parameters; $P$ and $Q$ are $m\times N$ and $N\times n$ real matrices; $f^{(l)}$ is a continuously differentiable function from $\mathbb{R}^N\times\mathbb{R}^{r_l}$ to $\mathbb{R}^N$ for all $l=0,1,\ldots,L-1$. We consider an approximation of $F\in C(D;\mathbb{R}^m)$ using ResNet with a system of difference equations \eqref{eq:resnet-general}. Let $K\in\mathbb{N}$ be the number of training data and $\{(\xi^{(k)},F(\xi^{(k)}))\}_{k=1}^K\subset D\times\mathbb{R}^m$ be the training data. We divide the label of the training data into the following disjoint sets.
\begin{equation*}
    \{1,2,\ldots,K\}=I_1\cup I_2\cup\cdots\cup I_M~(\mathrm{disjoint}),\quad(1\leq M\leq K).
\end{equation*}
Let $Px^{(L,k)}$ denote the final output for a given input data $\xi^{(k)}\in D$. We set $\mbox{\boldmath $\omega$}=(\omega^{(0)},\omega^{(1)},\ldots,\omega^{(L-1)})$. We define the loss function for all $\mu=1,2,\ldots,M$ as follows:
\begin{equation}\label{eq:loss-function-resnet-general}
  e_{\mu}(\mbox{\boldmath $\omega$})=\frac{1}{2|I_{\mu}|}\sum_{k\in I_{\mu}}\left|Px^{(L,k)}-F(\xi^{(k)})\right|^2,
\end{equation}
\begin{equation*}
  E=\frac{1}{2K}\sum_{k=1}^K\left|Px^{(L,k)}-F(\xi^{(k)})\right|^2.
\end{equation*}
We consider the learning for each label set using the gradient method. We find the gradient of the loss function \eqref{eq:loss-function-resnet-general} with respect to the design parameter $\omega^{(l)}\in\mathbb{R}^{r_l}$ for all $l=0,1,\ldots,L-1$ using error backpropagation. Using the chain rule, we obtain
\begin{equation*}
  \nabla_{\omega^{(l)}}e_{\mu}(\mbox{\boldmath $\omega$})=\sum_{k\in I_{\mu}}\nabla_{\omega^{(l)}}{x^{(l+1,k)}}^{\top}\nabla_{x^{(l+1,k)}}e_{\mu}(\mbox{\boldmath $\omega$})
\end{equation*}
for all $l=0,1,\ldots,L-1$. From \eqref{eq:resnet-general},
\begin{equation*}
  \nabla_{\omega^{(l)}}{x^{(l+1,k)}}^{\top}=\nabla_{\omega^{(l)}}{f^{(l)}}^{\top}(x^{(l,k)},\omega^{(l)}).
\end{equation*}
We define $\lambda^{(l,k)}:=\nabla_{x^{(l,k)}}e_{\mu}(\mbox{\boldmath $\omega$})$ for all $l=0,1,\ldots,L$ and $k\in I_{\mu}$. We obtain
\begin{equation*}
    \lambda^{(l,k)}=\nabla_{x^{(l,k)}}{x^{(l+1,k)}}^{\top}\nabla_{x^{(l+1,k)}}e_{\mu}(\mbox{\boldmath $\omega$})=\lambda^{(l+1,k)}+\nabla_{x^{(l,k)}}{f^{(l)}}^{\top}(x^{(l,k)},\omega^{(l)})\lambda^{(l+1,k)}.
\end{equation*}
Also,
\begin{equation*}
  \lambda^{(L,k)}=\nabla_{x^{(L,k)}}e_{\mu}(\mbox{\boldmath $\omega$})=\frac{1}{|I_{\mu}|}P^{\top}\left(Px^{(L,k)}-F(\xi^{(k)})\right).
\end{equation*}
Therefore, we can find the gradient $\nabla_{\omega^{(l)}}e_{\mu}(\mbox{\boldmath $\omega$})$ of the loss function \eqref{eq:loss-function-resnet-general} with respect to the design parameters $\omega^{(l)}\in\mathbb{R}^r$ by using the following equations
\begin{equation*}
  \left\{\begin{array}{lll}
      \displaystyle{\nabla_{\omega^{(l)}}e_{\mu}(\mbox{\boldmath $\omega$})=\sum_{k\in I_{\mu}}\nabla_{\omega^{(l)}}{f^{(l)}}^{\top}(x^{(l,k)},\omega^{(l)})\lambda^{(l+1,k)}}, & l=0,1,\ldots,L-1, & \\
      \displaystyle{\lambda^{(l,k)}=\lambda^{(l+1,k)}+\nabla_{x^{(l,k)}}{f^{(l)}}^{\top}(x^{(l,k)},\omega^{(l)})\lambda^{(l+1,k)}}, & l=0,1,\ldots,L-1, & k\in I_{\mu}, \\
      \displaystyle{\lambda^{(L,k)}=\frac{1}{|I_{\mu}|}P^{\top}\left(Px^{(L,k)}-F(\xi^{(k)})\right)}, & & k\in I_{\mu}.
  \end{array}\right.
\end{equation*}

\section{General ODENet and $(\alpha,\beta,\gamma)$-type ODENet}\label{appendix4}

In this section, we show that \eqref{eq:odenet-general} is a generalization of \eqref{eq:odenet-main}. Let natural numbers $n$ amd $m$ satisfy $n \ge m$. In \eqref{eq:odenet-general} with $N=m+n$ and $r=m+n(n+1)$, if we put 
\begin{align*}\begin{array}{c}
    {\displaystyle x(t) = \left(\begin{array}{c}
        \tilde{x}(t)\\ \tilde{y}(t)
    \end{array}\right)\in\mathbb{R}^n\times\mathbb{R}^m,\quad
    \omega(t) = \left(\alpha(t), \beta(t), \gamma(t)\right)\in\mathbb{R}^m\times\mathbb{R}^{n\times n}\times\mathbb{R}^n,}\\[16pt]
    {\displaystyle P=\left( O_{m\times n} ~ I_m \right),\quad 
    Q=\left(\begin{array}{c}
        I_n \\ O_{m\times n}
    \end{array}\right),\quad
    f(t,x(t),\omega(t)) = \left(\begin{array}{c}
        \beta(t)\tilde{x}(t)+\gamma(t)\\
        \alpha(t)\odot\mbox{\boldmath $\sigma$}(A\tilde{x}(t))
    \end{array}\right),}
\end{array}\end{align*}
where $A$ is an $m\times n$ real matrix, $I_n$ is the identity matrix of size $n$, and $O_{m\times n}$ is the $m\times n$ zero matrix, then $\tilde{x}(t)\in\mathbb{R}^n$ and $\tilde{y}(t)\in\mathbb{R}^m$ satisfy \eqref{eq:odenet-main} with the initial value 
\[
    \left(\begin{array}{c}
        \tilde{x}(0)\\ \tilde{y}(0)
    \end{array}\right)
    =Qx(0)
    =\left(\begin{array}{c} \xi\\ 0\end{array}\right)
\]
and $\tilde{y}(T)=Px(T)$ is the final output.

\end{document}